\documentclass[10pt]{article} 
\usepackage[accepted]{tmlr}

\usepackage{hyperref}
\usepackage{url}

\usepackage{amssymb}            
\usepackage{mathtools}          
\usepackage{mathrsfs}           
\usepackage{graphicx}           
\usepackage{subcaption}         
\usepackage[space]{grffile}     
\usepackage{url}                
\usepackage{macros}
\usepackage{booktabs}       
\usepackage{enumitem}
\usepackage{dsfont}

\newcommand\numbereqn{\addtocounter{equation}{1}\tag{\theequation}}

\graphicspath{ {./img/} }

\title{A Practical Algorithm for \\Feature-Rich, Non-Stationary Bandit Problems}

\author{\name Wei Min Loh \email wmloh@uwaterloo.ca \\
      \addr University of Waterloo, Vector Institute
      \AND
      \name Sajib Kumer Sinha \email Sajib\_Sinha@manulife.com \\
      \addr Manulife Financial
      \AND
      \name Ankur Agarwal \email Ankur\_Agarwal@manulife.com \\
      \addr Manulife Financial
      \AND
      \name Pascal Poupart \email ppoupart@uwaterloo.ca \\
      \addr University of Waterloo, Vector Institute
}

\def\month{03}  
\def\year{2026} 
\def\openreview{\url{https://openreview.net/forum?id=tRbwfej9uY}} 

\begin{document}

\maketitle

\begin{abstract}
Contextual bandits are incredibly useful in many practical problems. We go one step further by devising a more realistic problem that combines: (1) contextual bandits with dense arm features, (2) non-linear reward functions, and (3) a generalization of correlated bandits where reward distributions change over time but the degree of correlation maintains. This formulation lends itself to a wider set of applications such as recommendation tasks. To solve this problem, we introduce \textit{conditionally coupled contextual} ($C_3$) Thompson sampling for Bernoulli bandits. It combines an improved Nadaraya-Watson estimator on an embedding space with Thompson sampling that allows online learning without retraining. Empirical results show that $C_3$ outperforms the next best algorithm by 5.7\% lower average cumulative regret on four OpenML tabular datasets as well as demonstrating a 12.4\% click lift on Microsoft News Dataset (MIND) compared to other algorithms.\footnote{Our implementation can be found on GitHub (\url{https://github.com/wmloh/c3}).}.
\end{abstract}

\section{Introduction}\label{sec:intro}

Multi-armed bandits are applicable in many domains where there is high stochasticity and limited opportunities to fully explore all possible arms \citep{lattimore2020bandit}. A more useful variant of the problem called \textit{contextual bandit} \citep{lu2010contextual} tackles a significantly harder problem where it aims to optimize for the best arm for a given context. Contextual bandits find applications in many domains including recommender systems, online advertising, dynamic pricing, and alternatives to A/B testing.

In several applications, the arms can be decomposed into a set of features such that different arms share some features and therefore their reward distributions may be dependent (which we refer to as \textit{coupled} arms).  Furthermore, the reward distributions of the arms may evolve over time, leading to non-stationarity. This paper focuses on non-stationary contextual bandits with coupled arms.

To motivate the investigation into coupled arms, or coupling in general, we consider an example of strong coupling in product recommendation. Complementary goods such as bicycles and helmets are typically strongly coupled. If the demand for bicycles rises, it is likely that the demand for helmets would go up too. Cycling, in some countries, is a seasonal activity where sales of bicycles and helmets differ during summer and winter.

This provides useful information for an agent when recommending products. Unlike time-series forecasting, we do not directly model the demand over a future time period. Instead, we capture features that might suggest coupling, then if one product has a high demand, it would immediately infer that a strongly coupled product would also have a high demand. This is beneficial for any bandit algorithm that has to balance exploration and exploitation of information.

\paragraph{Main Contributions} One of our contributions is introducing the notion of coupled arms that are ubiquitous in many practical applications. The primary contribution is developing an algorithm called \textit{conditionally coupled contextual} (\alg) \textit{Thompson sampling} that solves contextual bandits with correlated/coupled arms in bandit or recommendation tasks. To the best of our knowledge, it is the first algorithm that can solve contextual bandits with correlated/coupled arms in a non-stationary setting. Unlike many other neural contextual bandit approaches, there are no lengthy gradient-based updates at inference time. \alg can also leverage arm features, which reduces the cold-start problem on arms, with the added benefit of working with a variable set of valid arms.

\section{Related Works}

\subsection{Contextual Bandits}

On the contextual bandit front, LinUCB by \cite{li2010contextual} can be considered one of the pioneering contextual bandit algorithms that demonstrated success through the use of simulated evaluation based on the Yahoo! news dataset. \cite{chu2011contextual} followed up with a rigorous theoretical analysis of a variant of LinUCB. The other popular paradigm is the Bayesian approach where \cite{agrawal2013thompson} developed a contextual bandit algorithm with Thompson sampling with a Gaussian likelihood and prior, assuming a linear payoff function. Unfortunately, there was no empirical evaluation on a practical problem.

With deep neural approaches on the rise, the contextual bandit community has been focusing on algorithms that can learn non-linear reward functions. One of the earlier algorithms was the KernelUCB by \cite{valko2013finite} which extends LinUCB by exploiting reproducing kernel Hilbert space (RKHS). Similarly, \cite{srinivas2009gaussian} generalized Gaussian processes for a contextual bandit setting by introducing the GP-UCB algorithm. While both algorithms attain a sublinear regret, practicality is limited since both have cubic time complexity in terms of the number of samples. After the breakthrough in the theoretical understanding of neural networks, particularly the neural tangent kernel (NTK) by \cite{jacot2018neural}, \cite{zhou2020neural} developed NeuralUCB which is a neural network-based contextual bandit algorithm with a complete theoretical analysis and a suite of empirical analysis which outperforms many algorithms in tabular dataset benchmarks from OpenML.

More recent advancements include SquareCB \citep{foster2020beyond} which reduces the problem of contextual bandits to an online regression problem. Under mild conditions, SquareCB along with a black-box online regression oracle has optimal bounded regret with no overhead in runtime or memory requirements. While they work on most regression models, the performance is highly dependent on the quality of the selected oracle. \citet{kveton2020randomized} introduced their take on randomized algorithms for contextual bandits. Their novel contributions include an additive Gaussian noise for a bandit setting that can be introduced to complex models such as neural networks.

\subsection{Other Relevant Bandits}

Several specialized bandit algorithms may be relevant to our problem. \cite{basu2021contextual} introduced a variant called contextual blocking bandit that handles a variable set of arms but assumes the selected arms of an agent influence the future set of valid arms. Their work revolves around this idea but ultimately differs in having a fixed, finite set of overall arms.

The problem of non-stationary reward distributions in bandits is usually referred to as a restless bandit, which is proposed by \cite{whittle1988restless}. \cite{wang2020restless} introduced the Restless-UCB algorithm which provably solves restless bandits, but does not account for context in the environment. \cite{chen2024contextual} improves upon this by leveraging context and budget constraints. A specialized solution by \cite{slivkins2008adapting} assumes that reward distribution changes gradually and works by continuously exploring while sometimes following the best arm based on the last two observations. However, they assume that all arms are independent and can be modelled as a Brownian motion which is uncommon in practice.

In the realm of sleeping bandits where some arms are occasionally not valid, \cite{slivkins2011contextual} introduced the contextual zooming algorithm that adaptively forms partitions in a similarity space. Operating on a space, as opposed to having fixed arms, allows them to effectively tackle the sleeping bandit problem. While they offer an innovative framework, it is likely that partitions in high-dimensional spaces (e.g. from large language models) would not be tractable.

\cite{kleine2024strategic} proposed a situation that is potentially relevant to many applications from a game-theoretic perspective. Goals of agents could conflict -- for example, an arm could be a video from a content creator on a public platform. In the current state, ``clickbait" videos would maximize the reward for the content creator but are seen as a negative phenomenon overall on the platform. \cite{kleine2024strategic} devised an incentive-aware learning algorithm to ensure that the learner obtains the right signal.

\subsection{Recommender Systems}

There is growing interest in applying contextual bandits on recommender systems \citep{ban2024neural}. On the topic of recommender systems, the more prevalent form of recommendation models in recent years is typically based on neural networks \cite{dong2022brief}. A popular approach is the two-tower neural network employed by \cite{huang2020embedding} and \cite{yi2019sampling}. A major benefit of such designs includes incredibly efficient retrieval within the learned embedding space as well as the ability to learn complex relationships of queries and items. Another design called BERT4Rec by \cite{sun2019bert4rec} leverages the power of transformers in sequential problems to provide recommendations. However, in the mentioned recommender designs, there is no element of exploration, unlike contextual bandits, resulting in a possibly greedy approach that may get stuck in a suboptimal policy. This can also be a major problem when user behaviour changes since all gradient updates to the models are based on historical data alone, and frequent retraining of the models can be expensive.

\section{Problem Formulation}\label{sec:problem}

\subsection{Contextual Bandit}\label{sec:cb}

In this paper, we focus on Bernoulli bandits where the conditional reward distribution is $R \sim \text{Bernoulli}(\mu(c, a))$ and $c \in \C$ is the current context. $\C$ is left to be an arbitrary space as long as it can be appropriately encoded. Each arm $a \in \A$ is a discrete class. The Bernoulli mean parameter $\mu: \C \times \A \to [0, 1]$ is a continuous function whose value represents the probability of the reward being one. $\mu$ is assumed to be Lipschitz continuous in $\C \times \A$.

\subsection{Coupled Arms: An Extension to Correlated Arms}

\cite{gupta2021multi} formulated the correlated multi-armed bandit problem where pulling an arm provides some information about another arm that is correlated. We view this as a special case of coupling.

\begin{figure}
	\centering
	\includegraphics[width=0.38\textwidth]{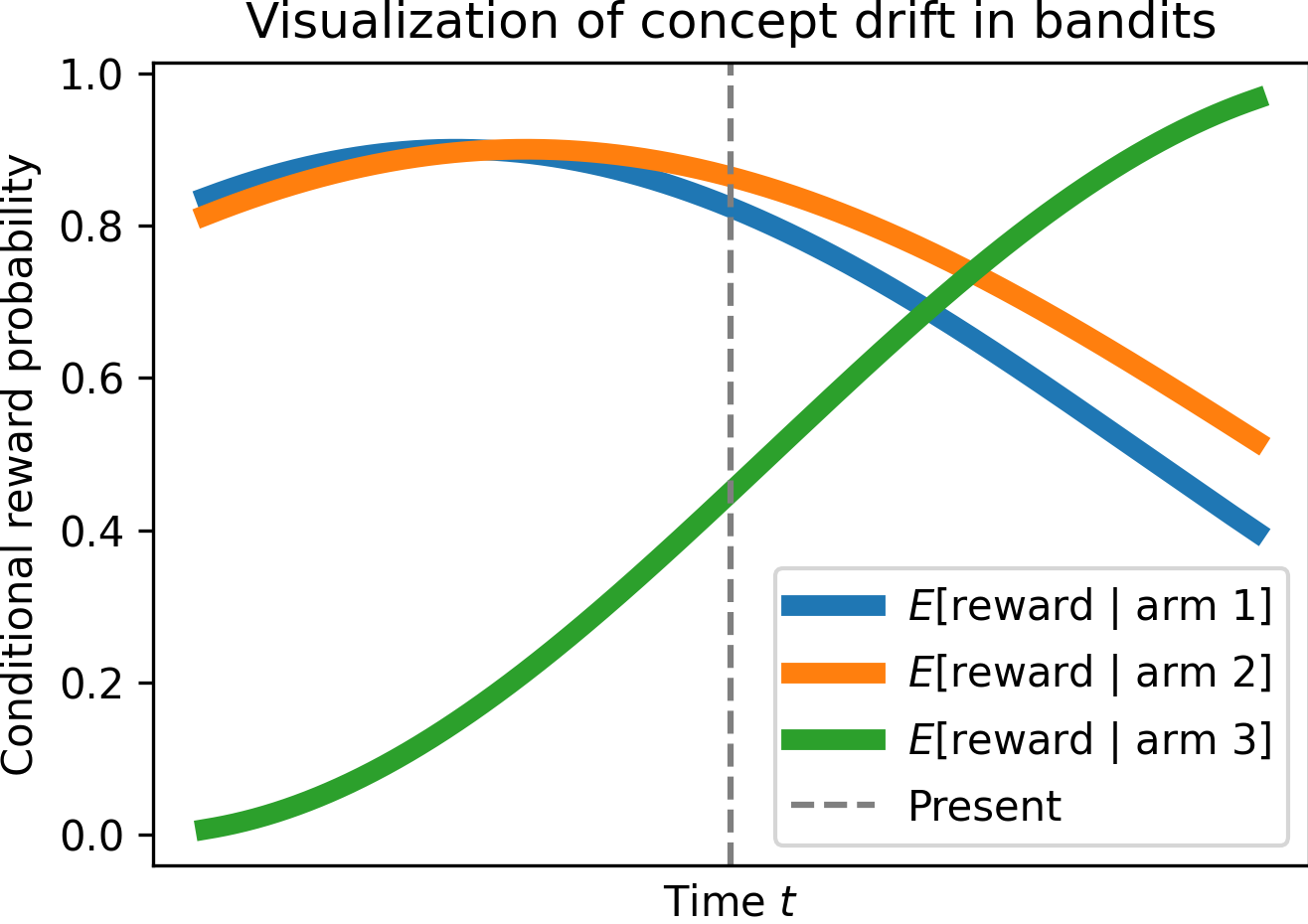}
	\caption{A three-arm example of strong and weak coupling during concept drift of expected reward distribution. Arms 1 and 2 are said to be \textit{strongly coupled}, while arms 1 and 3 are said to be \textit{weakly coupled}.}
	\label{fig:drift}
\end{figure}

Conditional reward distributions tend to be non-stationary in practice but we can still exploit some information on arms. While $\mu$ could change with time, there is an inherent structure to how $\mu$ of certain arms change. For example, the marginal probabilities of shoppers buying snowboards and skis are likely to be coupled, even though both probabilities would drop during the summer and rise during winter in a similar fashion.

In reference to Figure \ref{fig:drift}, up to the present, we see that the expected rewards for arms 1 and 2 are very similar and evolve similarly together in the time interval. We call these \textit{strongly coupled arms}. Pulling one arm will give some information about the other arm in any time period, in contrast to correlated arms where this condition is only true at a particular point in time. Conversely, arms 1 and 3 are \textit{weakly coupled arms} since they have mostly different historical expected rewards.

Concept drift, in the context of bandits, can be defined as: there exist some times $t_1, t_2 \in \{1, 2, ..., T\}$ where $P(r|c, a, t_1) \neq P(r|c, a, t_2)$ ($r$ is the reward, $c$ is the context and $a$ is an arm) \citep{lu2018learning}. The degree of coupling between arms $a$ and $a'$ for a given context $c$ is
\begin{align}
	\rho(a, a', c) = 1 - \frac{1}{T} \sum_{t=1}^T D_\text{JS}(P(r|c, a, t), P(r|c, a', t))
\end{align}
where $D_\text{JS}$ is the Jensen-Shannon divergence over the reward distributions. Arms $a$ and $a'$ are said to be perfectly coupled for context $c$ if $\rho(a, a', c) = 1$.

\subsection{Non-stationary Contextual Bandits with Coupled Arms}\label{sec:cbca}

We extend vanilla contextual bandits to a more general problem. The Bernoulli mean parameter $\mu(c, a, t)$ is now a function of time too. We also assume that $\mu$ is Lipschitz continuous with respect to time.

Each arm $a \in \A \subseteq \R^d$ can be characterized with a vector of dense features, which implies that there are infinitely many possible arms but a finite number of arms are presented to an agent at each time step. We call them \textit{valid arms} when they are presented to the agent at that particular time step. In a special case where arms do not have dense features, they can still be represented as one-hot encoded vectors.

In the presence of concept drift, i.e. $\mu(c, a, t) \neq \mu(c, a, t')$ generally for $t \neq t'$, and infinitely many possible arms, this can be a very difficult task. Here, we assume that there are strongly coupled arms that can be exploited. The degree of coupling is learnable from the arm features, conditioned on the context $c$.

\subsection{Objective}

The goal in both problems is to minimize the \textit{cumulative regret} which is the cumulative difference between the reward of the best arm in hindsight $a^*$ and the reward of the chosen arm $a_t$ for a given context $c_t$ over all time steps \citep{lattimore2020bandit}.
\begin{align}
	\text{CumulativeRegret}(T) = \sum_{t=1}^T \mu(c_t, a_t^*, t) - \mu(c_t, a_t, t)
\end{align}

\section{Methodology}\label{sec:methodology}

\subsection{Embedding Model}
\label{sec:embmodel}

This section pertains to the process of training an offline regression oracle, a class of optimization oracles for contextual bandits described by \cite{bietti2021contextual}.

\paragraph{Importance Weighted Kernel Regression} The chosen approach to capture coupling between arms is based on the hypothesis that the empirical rewards of a relevant subset of reference samples can be used to estimate the reward of some unseen sample. Nadaraya-Watson kernel regression (NWKR) is a well-known nonparametric regression method \citep{nadaraya1964estimating} that uses a weighted average of labels of neighbouring samples, weighted by a kernel function that satisfies some conditions.

Suppose there is a learnable space $\S \subset \R^d$ that represents some joint space of contexts $\C$ and arms $\A$, similar to the formulation by \cite{slivkins2011contextual}. We could use NWKR on some reference dataset $\D_\text{ref} = \{(s_i, r_i)\}_{i=1}^n$ containing historical context-arm embeddings $s_i \in \S$ and rewards $r_i \in \R$, with $\kappa: \S \times \S \to \R$ being the \textit{radial basis function} (RBF) kernel to estimate the mean reward of an unseen sample $s \in \S$,
\begin{align}\label{eqn:muhat}
	\hat\mu(s) = \frac{\sum_{s_i, r_i \in \D} \kappa(s, s_i) r_i }{\sum_{s_i \in \D} \kappa(s, s_i)}
\end{align}

Note that $\mu$ introduced in Section \ref{sec:cbca} is a function of context, arm, and time while $\hat\mu$ in Equation \ref{eqn:muhat} is a function of context-arm (jointly) and reference dataset $\D$. Unlike time series algorithms, we do not directly model time as an independent variable since we have to make assumptions on how the reward distribution changes over time. Instead, we use historical examples in $\D$ that are close to the present time to indirectly condition on the time.

An issue with NWKR is the susceptibility to bias from drifts in the sampling distribution. The overall weight of samples may dominate the regression due to the disproportionately many samples in the vicinity of a query sample. This is particularly an issue in bandit algorithms where the distribution of arms selected will likely change as more data is ingested. The points in $\S$ provide information about $\mu$ in that subspace, but the frequency of points should not affect $\hat\mu$, except for a measure of confidence which will be discussed in Section \ref{sec:inference}. To mitigate sampling bias, we introduce \textit{importance weights}. A sample is assigned a lower importance weight if it is located in the vicinity of many samples, and a higher importance weight otherwise. More precisely, the importance weight is defined as
\begin{align}
	w(s) = \frac{1}{\sum_{s_i \in \D}\kappa(s, s_i)} \in (0, 1]\label{eqn:impweight}
\end{align}

The \textit{importance weighted kernel regression} (IWKR) is defined as
\begin{align}\label{eqn:iwkr}
	\hat\mu(s) = \frac{\sum_{s_i, r_i \in \D} \kappa(s, s_i) w(s_i) r_i }{\sum_{s_i \in \D} \kappa(s, s_i) w(s_i)}
\end{align}

\begin{theorem}\label{thm:linweightupdate}
	Suppose a vector of importance weights $\bm{w}$ of $n$ samples has been computed. The time complexity of updating the importance weights, given a new sample, is $\O(n)$.
\end{theorem}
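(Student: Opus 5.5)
The plan is to maintain, alongside $\bm{w}$, the vector of unnormalized kernel sums $\bm{z} \in \R^n$ with $z_i = \sum_{s_j \in \D} \kappa(s_i, s_j)$. By Equation \ref{eqn:impweight}, $w_i = 1/z_i$, so $\bm{z}$ can be recovered from $\bm{w}$ in $\O(n)$ time by elementwise inversion. No extra state is really needed, since $z_i > 0$: the RBF kernel is strictly positive and $\kappa(s_i,s_i)=1$, which gives $z_i \ge 1$. I treat the embedding dimension $d$ as a constant, so one kernel evaluation costs $\O(1)$, or $\O(d)$ if we track it.

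When a new sample $s_{n+1}$ arrives, the reference set becomes $\D' = \D \cup \{s_{n+1}\}$. Only two kinds of quantities change. First, for each existing $i \le n$, the sum over $\D'$ splits as the old sum plus one new term, so
\begin{align*}
	w_i' = \frac{1}{z_i + \kappa(s_i, s_{n+1})} = \frac{w_i}{1 + w_i\,\kappa(s_i, s_{n+1})}.
\end{align*}
This costs one kernel evaluation and $\O(1)$ arithmetic per index, hence $\O(n)$ in total. Second, the new weight is $w_{n+1} = \big(\sum_{j=1}^{n+1} \kappa(s_{n+1}, s_j)\big)^{-1}$. It uses exactly the same $n$ kernel values $\kappa(s_{n+1}, s_j)$ computed in the first step (by symmetry of $\kappa$), plus $\kappa(s_{n+1},s_{n+1})=1$. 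So it adds only $\O(n)$ further work. Summing the two steps gives $\O(n)$ overall, or $\O(nd)$ with explicit dependence on $d$.

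The only subtle point is justifying that no old pairwise kernel values need recomputation. This follows because each $z_i$ is additive over $\D$, and the existing embeddings are fixed; the embedding model is trained offline. I would state this assumption explicitly. I would also remark that a naive recomputation from scratch costs $\O(n^2)$, which is what the incremental update avoids. If one insists on updating $\bm{w}$ directly without $\bm{z}$, the closed form displayed above already handles it, so nothing beyond that is required.
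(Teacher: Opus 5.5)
Your proposal is correct and follows essentially the same route as the paper: your update $w_i' = w_i/(1 + w_i\,\kappa(s_i,s_{n+1}))$ is algebraically the paper's $w_i^{(n+1)} = 1/(z_i^{(n+1)} + 1/w_i^{(n)})$, and your new weight $1/(1+\sum_{j\le n}\kappa(s_{n+1},s_j))$ is built from those same $n$ kernel values, giving $\mathcal{O}(n)$ overall. The differences are only notational and bookkeeping: the paper's $z^{(n+1)}$ denotes the row of new kernel values, memoized from the inference step rather than recomputed, whereas your $z$ denotes the old kernel sums $1/w_i$.
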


A naive implementation of the importance weights computation would incur a quadratic time complexity. However, this can be optimized to be linear time as shown in the proof in Supplementary Material \ref{apdx:imweights}.

\begin{theorem}\label{thm:nwlip}
	Suppose $\mu(s)$ is Lipschitz continuous on $\S$. In the limit of the size of the reference dataset $\D_\text{ref}$ where points are sufficiently sampled from the neighbourhood of some query point $s$, the importance weighted kernel regression with a radial basis function kernel is an approximate estimator of $\mu(s)$.
\end{theorem}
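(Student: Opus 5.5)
The plan is to write the IWKR estimator of Equation \ref{eqn:iwkr} as a convex combination of the observed rewards, and to split its error into a bias term controlled by Lipschitz continuity and a noise term that vanishes as the reference set grows. Define normalized weights
\begin{align*}
	\alpha_i(s) = \frac{\kappa(s, s_i) w(s_i)}{\sum_{s_j \in \D} \kappa(s, s_j) w(s_j)},
\end{align*}
which are nonnegative and sum to one because $\kappa > 0$ for the RBF kernel and $w(s_i) \in (0,1]$ by Equation \ref{eqn:impweight}. Writing $r_i = \mu(s_i) + \varepsilon_i$ with $\mathbb{E}[\varepsilon_i \mid s_i] = 0$ (Bernoulli noise), we get
\begin{align*}
	\hat\mu(s) - \mu(s) = \sum_i \alpha_i(s)\bigl(\mu(s_i) - \mu(s)\bigr) + \sum_i \alpha_i(s)\,\varepsilon_i .
\end{align*}
Proving the theorem then amounts to showing that both terms are small for large $n$, up to a bandwidth-dependent error. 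That error is the reason the statement says ``approximate''.

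For the bias term, let $L$ be the Lipschitz constant. Then $|\sum_i \alpha_i(\mu(s_i)-\mu(s))| \le L \sum_i \alpha_i \|s_i - s\|$. Fix a radius $\delta$ and split the sum into points inside the ball $B(s,\delta)$ and points outside it. The inner part is at most $L\delta$. For the outer part, I would bound each term by $|\mu(s_i)-\mu(s)| \le 1$, since $\mu \in [0,1]$. It then remains to show that the total outer weight $\sum_{\|s_i - s\|>\delta} \alpha_i$ is small.

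Showing this is the step I expect to be the main obstacle, because the importance weights deliberately undo the sampling density. One approach is to view $w(s_i) = 1/\sum_j \kappa(s_i,s_j)$ as approximately $1/(n\, h\, p(s_i))$, with $p$ the sampling density and $h$ the kernel's normalizing volume. Then
\[
\sum_i \kappa(s,s_i) w(s_i) \approx \frac{1}{h}\int \kappa(s,x)\,dx,
\]
so the effective weighting becomes the kernel under a uniform measure on the sampled region. The outer mass is then the RBF tail $\int_{\|x-s\|>\delta}\kappa / \int \kappa$, which is exponentially small in $\delta^2/\sigma^2$ for bandwidth $\sigma$. The hypothesis that points are sufficiently sampled from the neighbourhood of $s$ is used here: it keeps the local density bounded away from zero, so the denominator does not collapse and the density approximation of $w$ is valid near $s$. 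I would try this density-ratio argument first. If it proves too delicate, a cruder fallback is to bound $w$ above and below by constants times $1/(n\,p)$ on compact regions, which still gives a tail bound up to constant factors.

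For the noise term, conditioning on the $s_i$, its variance is at most $\frac14\sum_i \alpha_i^2 \le \frac14 \max_i \alpha_i$. Under the sufficient-sampling assumption, the number of points with non-negligible weight grows with $n$ and no single point dominates, so $\max_i \alpha_i \to 0$. Chebyshev's inequality then gives convergence in probability to zero. Combining the two terms yields
\[
\limsup |\hat\mu(s) - \mu(s)| \le L\delta + O\!\left(e^{-\delta^2/(2\sigma^2)}\right)
\]
for every $\delta$. Optimizing over $\delta$ gives an error of order $L\sigma\sqrt{\log(1/\sigma)}$, so $\hat\mu$ is an approximate estimator whose accuracy improves as the bandwidth shrinks.
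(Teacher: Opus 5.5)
Your argument is correct, at and in places above the paper's level of rigor. Its central step is the same one the paper uses: as $n\to\infty$ the importance weight $w(s_i)$ acts like the inverse of the sampling density, so the effective weighting is the RBF kernel against a uniform measure.

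The organization around that step differs. The paper passes directly to the continuum ratio $\int_{\mathcal{S}}\kappa(s,s')R(s')\,ds'\big/\int_{\mathcal{S}}\kappa(s,s'')\,ds''$ and identifies the normalized kernel with the density of $\mathcal{N}(s,\sigma^2 I)$. It then asserts that for small $\sigma$ and Lipschitz $\mu$ this Gaussian average is approximately $\mu(s)$. The reward noise is never isolated: the random $R(s')$ sits inside the integral and is simply equated with $\mathbb{E}[R\mid S=s]$. You instead write $\hat\mu(s)$ as a convex combination and separate a bias term from a zero-mean noise term. You dispose of the noise with a conditional variance bound and Chebyshev, and control the bias with a $\delta$-split and a Gaussian tail estimate. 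This gives an explicit error of order $L\sigma\sqrt{\log(1/\sigma)}$ instead of a qualitative ``$\approx$''.

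The paper's route buys brevity and a clean interpretation: IWKR as Gaussian smoothing of $\mu$ at $s$. Yours buys a quantitative statement and an explicit treatment of sampling noise. It is also more robust, since your fallback only needs a density ratio bounded above and below, whereas the exact Gaussian identification requires $\mathcal{S}=\mathbb{R}^d$ and exact cancellation.

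Two refinements are worth making.

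First, at fixed bandwidth $n\,w(s_i)$ actually tends to $1/(\kappa*p)(s_i)$, where $(\kappa*p)(x)=\int\kappa(x,y)p(y)\,dy$, not to $1/(h\,p(s_i))$. So the ``uniform measure'' claim you share with the paper is itself only approximate. Your fallback, bounding $p/(\kappa*p)$ by constants near $s$, is the version you should actually use. That bound is the real content of the sufficient-sampling hypothesis. It is also what gives $\max_i\alpha_i\to 0$, since an isolated point within a few bandwidths of $s$ would receive $w\approx 1$ and dominate.

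Second, bound the outer terms by $L\|s_i-s\|$ rather than by $1$. The bias then reduces to a first moment of the effective kernel, of order $L\sigma$ up to constants depending on $d$ and the density ratio, which removes the $\sqrt{\log(1/\sigma)}$ factor.
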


The proof of Theorem \ref{thm:nwlip} can be found in Supplementary Material \ref{apdx:iwkrproof}.

\paragraph{Parametrization of Embedding Space} While IWKR can estimate values, the input space may not be sufficiently calibrated with respect to the fixed kernel function. This can be rectified by training a multilayer perceptron as an embedding model $\phi: \C \times \A \to \S$ with IWKR towards a classification objective.
\begin{align}
	\min_\phi \E\left[\L_\text{BCE}(\hat\mu(\phi(c, a)), r) + \lambda \L_\text{ECE}(\hat\mu(\phi(c, a)), r)\right]
\end{align}
where $\L_\text{BCE}$ is the binary cross entropy loss and $\L_\text{ECE}$ is the expected calibration error \citep{naeini2015obtaining}. Every context-arm pair will be embedded as $s = \phi(c, a)$ so that IWKR acts on an optimal space.

We incorporate calibration as an auxiliary objective to reduce overconfidence which is notoriously common in deep neural networks \citep{guo2017calibration}. In a bandit algorithm involving neural networks, calibration is important to avoid biases when facing a lack of data.

An optimal model would tightly cluster strongly coupled context-arm pairs. To encourage the learning of coupling in $\phi$, we can partition the reference dataset by time intervals $\D_\text{ref} = \D_\text{ref}^{(1)} \, \cup \, \cdots \, \cup \, \D_\text{ref}^{(T)}$ so that IWKR only uses samples from the relevant time interval only for a given query. This avoids averaging reward values from a different time period which may be subject to concept drift. The training process is described in Algorithm \ref{alg:train}.

\begin{algorithm}
	\caption{\alg training process}\label{alg:train}
	\begin{algorithmic}[1]
		\State \textbf{Inputs}: Training dataset $\D = \{(c_i, a_i, r_i)\}_{i=1}^n$, neural network $\phi$
		\For{epoch $e$}
		\State Randomly split $\D$ into $\D_\text{ref} = (\bm{c}_\text{ref}, \bm{a}_\text{ref}, \bm{r}_\text{ref})$, and $\D_q$
		\State Embed reference $\bm{s} \gets \phi(\bm{c}_\text{ref}, \bm{a}_\text{ref})$
		\State Compute importance weights $\bm{w}$ for $\bm{s}$

		\For{$(c, a, r) \in \D_q$}
		\State Embed query $q \gets \phi(c, a)$
		\State $\bm{s}', \bm{r}_\text{ref}' \gets $ filter for samples in $\bm{s}, \bm{r}_\text{ref}$ such that they are in the same time interval as $q$
		\State Compute RBF weights between $\bm{s}'$ and $q$
		\State Estimate weighted mean reward $\hat\mu(q)$ using the RBF weights and $\bm{r}_\text{ref}'$
		\State Compute the sum of losses with $\hat\mu$ and $r$
		\State Perform gradient descent on $\phi$
		\EndFor
		\EndFor
	\end{algorithmic}
\end{algorithm}

\paragraph{Algorithm Details} The training is done batch-wise. The randomization in Step 3 forms a self-supervised learning objective by masking certain samples and creating a predictive subtask. $D_\text{ref}$ can be seen as the set of in-context samples and $D_q$ contains the training samples. All samples (both in $D_\text{ref}$ and $D_q$) are embedded with $\phi$, and the objective is to optimize the embedding space produced by $\phi$.

Step 9 computes the RBF weights between $q$ and every embedded reference sample in $\bm{s}'$. Then in Step 10, we apply Equation \ref{eqn:iwkr} on $q$ using the RBF weights, importance weights, and $\bm{r}_\text{ref}'$ to compute $\hat\mu(q)$. The gradient update should update $\phi$ to embed context-arm pairs with similar rewards closely.

\subsection{Inference}
\label{sec:inference}

This section extends the offline regression oracle by incorporating exploration with a Beta distribution and Thompson sampling.

\paragraph{Thompson Sampling} The embedding model with IWKR is trained towards a classification objective for predicting the expected reward. To incorporate an element of exploration, we adopt Thompson sampling. The conjugate prior of a Bernoulli bandit is a Beta distribution with parameters $\alpha$ and $\beta$, where $\alpha$ usually refers to the counts of $r=1$. The notion of counts in a continuous embedding space $\S$ can be solved using partial counts of rewards weighted by the RBF kernel. However, this is complicated by importance weights since $\S$ was learned with IWKR.

The expected value of the Beta distribution should be impacted by $w(s)$ since it makes $\hat\mu$ less biased and robust against sampling distribution shifts. However, the variance of the Beta distribution should not be impacted by $w(s)$ since it would cause $\alpha$ and $\beta$ to lose information on the number of times the neighbourhood was sampled. A solution to this is to introduce some modifications to the computation of the parameters to the conjugate prior. Let $\eta(s) = \sum_{i=1}^n \kappa(s, s_i)$. Define
\begin{alignat}{2}
	\alpha(s) & := \eta(s)\frac{\sum_{i=1}^n \kappa(s, s_i) w(s_i) r_i}{\sum_{i=1}^n \kappa(s, s_i) w(s_i)} && = \eta(s) \hat\mu(s) \label{eqn:alpha}\\
	\beta(s) & := \eta(s)\frac{\sum_{i=1}^n \kappa(s, s_i) w(s_i) (1 - r_i)}{\sum_{i=1}^n \kappa(s, s_i) w(s_i)} && = \eta(s) (1 - \hat\mu(s)) \label{eqn:beta}
\end{alignat}

With this, we can sample the posterior $\tilde\mu(s) \sim \text{Beta}(\alpha(s), \beta(s))$ from the resulting Beta distribution. The mean simplifies to
\begin{align}
	\E[\tilde\mu(s)] & = \frac{\alpha(s)}{\alpha(s) + \beta(s)} = \hat\mu(s)
\end{align}

which is exactly IWKR. On the other hand, the information of the frequency the neighbourhood was sampled is still preserved because it can be shown that the total count is still a function of $n$,
\begin{align}
	\alpha(s) + \beta(s) & = \sum_{i=1}^n \kappa(s, s_i)
\end{align}

\begin{figure}
	\centering
	\includegraphics[width=0.55\textwidth]{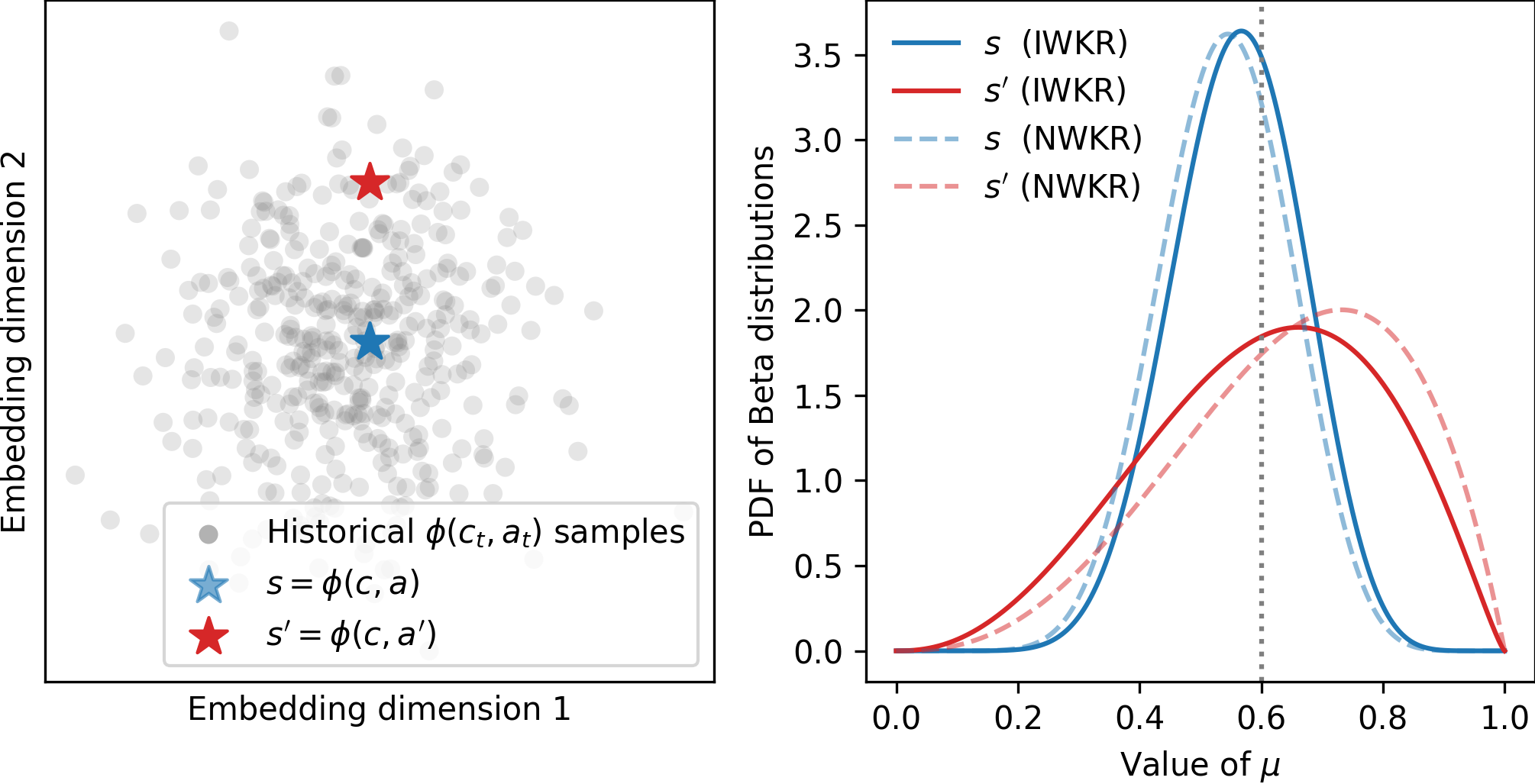}
	\caption{An example of Thompson sampling exploration in continuous spaces: [left] embedding space containing reference samples $\D_\text{ref}$ (circles) and different arms (stars) for a given context $c$, and [right] constructed Beta distribution with (IWKR) and without (NWKR) importance weights. The true $\mu$ of both arms for that context is 0.6.}
	\label{fig:thompson}
\end{figure}

The left side of Figure \ref{fig:thompson} illustrates two arms $a, a'$ when jointly embedded with context $c$. The gray circles refer to previously pulled arms $a_t$'s for different contexts $c_t$'s. For a new query with context $c$, arm $a$ (blue) is embedded closer to more samples hence it has essentially been pulled more often for context $c$. Using partial counts weighted by an RBF kernel, this manifests as a more peaked Beta distribution as shown on the right side, resulting in less exploration compared to $a'$ (red).

The combination of IWKR and Thompson sampling gives rise to \textit{conditionally coupled contextual} Thompson sampling (\alg). The term ``conditionally" in \alg refers to the degree of coupling being conditional on the context.

Figure \ref{fig:thompson} also shows the effectiveness of importance weights in a non-uniform sampling setting. The mean of distributions formed with IWKR is closer to 0.6 compared to NWKR, i.e. without importance weights.

\paragraph{Approximate Bayesian Update} Recall that IWKR is dependent on a reference dataset $\D$. The sampling of the posterior weighs every sample with the RBF kernel relative to some query point. This can be seen as a type of conditioning on a local subspace. After an arm is pulled and the reward is observed, we should have the triplet $(c_{n+1}, a_{n+1}, r_{n+1})$. Since the algorithm operates on $\S$ and requires the updated importance weights, to conserve time and memory, we can directly store the triplet $(\phi(c_{n+1}, a_{n+1}), r_{n+1}, w_{n+1}^{(n+1)})$ into $\D$. This is an approximate Bayesian update and is important for the online learning element.

Bayesian update typically assumes that every random variable in a sequence is identically distributed. The information gathered is directly stored in the parameter space of some statistical distribution, which will be updated using some closed-form algebraic expression. For \alg, the information is stored in the reference dataset $\D$ embedded on $\S$. This allows flexibility to both append and remove samples from $\D$. In problems with concept drift, the conditional reward distribution shifts as a function of time but a typical Bayesian update does not effectively handle this since it might simply average the distributions across time.

To mitigate the issues presented by non-stationarity without frequent retraining, \alg allows the removal of older samples while appending the latest samples. Time can be seen as a special case of context and since $\mu$ is assumed to be Lipschitz continuous, samples nearer in the time dimension would be more relevant.

\begin{table*}
	\caption{Comparison between algorithms where linear refers to both LinUCB, LinTS, and SquareCB while neural refers to both NeuralUCB and NeuralTS. $n$ refers to the number of samples seen.\\}
	\label{tab:comparealg}
	\centering
	\begin{tabular}[h]{ccccccc}
		Algorithm & \shortstack{Inference \\ time} & \shortstack{Update \\ time} & \shortstack{Non-linear \\ rewards} & \shortstack{Non-stationary \\ tasks} & \shortstack{Arm \\ features} \\
		\midrule
		\alg & $\O(n)$ & $\O(1)$ & \cmark & \cmark & \cmark \\
		Linear & $\O(1)$ & $\O(1)$ & \xmark & \xmark & partially \\
		Neural & $\O(1)$ & $\O(n)$ & \cmark & \xmark & \xmark \\
		\bottomrule
	\end{tabular}
\end{table*}

The entire inference pipeline can be summarized in Algorithm \ref{alg:infer}. The update process does not include any gradient updates, unlike many neural contextual bandit algorithms. The properties of \alg and other contextual bandit algorithms are summarized in Table \ref{tab:comparealg}.

\begin{algorithm}
	\caption{\alg inference process}\label{alg:infer}
	\begin{algorithmic}[1]
		\State \textbf{Inputs}: Reference dataset $\D_\text{ref} = (K, \bm{r}_\text{ref}, \bm{w})$, context $c$, set of valid arms $\{a_1, ..., a_k\}$, trained embedding model $\phi$
		\For{each valid arm index $i \in [k]$}
		\State Embed queries $q_i \gets \phi(c, a_i)$
		\State Compute $\alpha(q_i)$, $\beta(q_i)$ with respect to $\D_\text{ref}$
		\State Sample $\hat{r}_i \sim \text{Beta}(\alpha(q_i), \beta(q_i))$
		\EndFor
		\State Play best arm $j \gets \argmax_{i \in [k]} \hat{r}_i$
		\State Observe reward $r$
		\State Append $(q_j, r)$ to $\D_\text{ref}$
		\State Update $\bm{w}$ in $\D_\text{ref}$
	\end{algorithmic}
\end{algorithm}

\begin{theorem}\label{thm:regret}
	Let the embedding space of $\phi$ be $S \subset [0, 1]^d$. Assume $\mu$ is $L$-Lipschitz. In a stationary bandit scenario, $C_3$ incurs an expected regret of
	\begin{align*}
		\E[R_T] \in \mathcal{O}\left(L^\frac{d}{d+2} T^\frac{d+1}{d+2} (\log T)^\frac{1}{d+2}\right)
	\end{align*}
\end{theorem}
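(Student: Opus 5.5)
The plan is to treat $C_3$ in the stationary case as a kernel-smoothed Thompson sampling on the compact set $S\subset[0,1]^d$. The analysis compares it with a discretized (zooming-free) Lipschitz bandit with bandwidth $h$. We then balance a discretization bias term against an estimation/exploration term. The exponent $\frac{d+1}{d+2}$ is the signature of that trade-off: with $N\asymp h^{-d}$ effective cells and bias $Lh$, the bound has the form $T L h + \sqrt{N T\log T}$ (or $N\log T/\text{gap}$-type terms summed over scales). Optimizing over $h$ gives $h \asymp (\log T/(L^2T))^{1/(d+2)}$. Substituting yields $TLh \asymp L^{\frac{d}{d+2}}T^{\frac{d+1}{d+2}}(\log T)^{\frac{1}{d+2}}$, which is exactly the claimed rate. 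So I will fix the RBF bandwidth at this $h$ and prove a bound of the form $\E[R_T]\lesssim TLh + \sqrt{h^{-d}T\log T}$.

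\textbf{Step 1 (bias).} Using Theorem~\ref{thm:nwlip} and the Lipschitz assumption, show that $|\E[\hat\mu(s)] - \mu(s)| \le c\,L h$ up to a negligible tail from the Gaussian kernel. The importance weights only reweight the samples, and the RBF mass outside radius $h\sqrt{\log T}$ is $O(1/T)$. This gives a per-round bias of order $Lh$, and hence a total of $O(TLh)$.

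\textbf{Step 2 (concentration of the Beta posterior).} Define the pseudo-count $\eta(s)=\alpha(s)+\beta(s)$. Show that, with probability $1-1/T^2$, $|\hat\mu(s)-\E\hat\mu(s)|\lesssim\sqrt{\log T/\eta(s)}$ for all $s$, via a self-normalized Hoeffding/Freedman bound on the weighted sum (weights lie in $(0,1]$). Show the same for a Beta$(\eta\hat\mu,\eta(1-\hat\mu))$ sample. We also need a standard anti-concentration lower bound for the Beta sample at the optimal arm, so that Thompson sampling is optimistic with constant probability, following Agrawal--Goyal.

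\textbf{Step 3 (counting).} Partition $S$ into $N\asymp h^{-d}$ cubes of side $h$. A pull in a cube increases $\eta$ by at least a constant for every point in that cube. The regret from rounds where the played point has small $\eta$ can therefore be summed cube by cube as $\sum_{\text{cubes}}\sum_k\sqrt{\log T/k}\lesssim \sqrt{N T\log T}$ by Cauchy--Schwarz. Combining Steps 1--3 and choosing $h$ as above gives the theorem.

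\textbf{Main obstacle.} I expect Step 2 to be the hardest. The interaction between the importance weights (which depend on all past samples) and the adaptively chosen arms makes the weighted average a non-martingale: $w(s_i)$ changes as new points arrive. I would first try to decouple this by noting that $w(s_i)\in(0,1]$ and that $\kappa(s,s_i)w(s_i)$ restricted to a single cube is within constant factors of uniform. Then $\hat\mu$ is sandwiched between local empirical means, to which a martingale bound can be applied cube by cube with a union bound over cubes and time. If this fails, I would fall back to analyzing a frozen-weight variant and argue that the update in Theorem~\ref{thm:linweightupdate} perturbs each weight by at most a constant factor.
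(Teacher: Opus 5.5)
\textbf{Verdict.} Your skeleton matches the paper's, but three ingredients fail, and the paper's own proof does not supply them. The shared skeleton is: localize at scale $h$, pay $O(TLh)$ bias, pay an estimation cost over $N\asymp h^{-d}$ cells, then optimize $h$. The paper gets the estimation cost from gap-dependent per-cell counts $O(\log T/\Delta_g^2)$ and an $\varepsilon$-split with $\varepsilon=Lh$, giving $LhT+\log T\,h^{-(d+1)}/L$. You use the gap-free $\sqrt{h^{-d}T\log T}$. At $h=(\log T/(L^2T))^{1/(d+2)}$ the two coincide, so your final arithmetic is right.

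\emph{Step~1 (bias).} With the untruncated Gaussian kernel, cutting at radius $h\sqrt{\log T}$ leaves per-point kernel values of $T^{-1/2}$. Even at radius $h\sqrt{2\log T}$, up to $T$ far points still carry total mass $O(1)$, which is not negligible against a small local mass. Inside the ball, Lipschitzness only gives bias $Lh\sqrt{\log T}$. Carried through your balance, this yields $L^{d/(d+2)}T^{(d+1)/(d+2)}(\log T)^{1/2}$, not the stated $(\log T)^{1/(d+2)}$. Theorem~\ref{thm:nwlip} cannot fix this, since it is asymptotic and gives no finite-sample rate. Do what the paper does and analyze a compactly supported (truncated) kernel. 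Then $\sum_i p_i\mu(s_i)$, with $p_i\propto\kappa(s,s_i)w(s_i)$, is within $Lh$ of $\mu(s)$ deterministically, whatever the weights. Take cubes of side $h/\sqrt d$ so that your Step~3 lower bound on $\eta$ survives.

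\emph{Step~2 (concentration).} The width $\sqrt{\log T/\eta(s)}$ is false for the importance-weighted estimator: its fluctuation is governed by $(\sum_ip_i^2)^{-1}$, not by $\eta(s)$. Place one sample $a$ and a tight cluster of $M$ samples symmetrically at distance $0.9h$ on either side of $s$, so they are more than $h$ apart. With the truncated kernel, $w(a)=1$ and each cluster weight is about $1/M$. So $r_a$ carries weight $1/2$ in $\hat\mu(s)$, while $\eta(s)\ge e^{-1/2}(M+1)$. With the Gaussian kernel, the same construction at offset $r$ makes $\eta(s)\sum_ip_i^2$ as large as $e^{\Theta(r^2/h^2)}$.

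Your proposed repairs do not work:
\begin{itemize}
\item The decoupling fails for the same reason. Each $w(s_i)=1/\eta(s_i)$ depends on a window that crosses cube faces, so two points in one cube can have weights differing by an arbitrary factor.
\item Even weights within a factor $C$ of each other only place $\hat\mu$ in $[\bar r/(\bar r+C(1-\bar r)),\,C\bar r/(C\bar r+1-\bar r)]$ around the local empirical mean $\bar r$. That band has constant width, because under adaptive sampling $w(s_i)$ depends, through later locations, on $r_i$.
\item The frozen-weight fallback fails too. Each update in Theorem~\ref{thm:linweightupdate} shrinks $w_i$ by a factor in $[1/2,1]$, but these compound to a factor up to $T$. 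Moreover, weights frozen at insertion give the $k$-th point played in a small region weight $\approx 1/k$, hence an effective sample size of only $\Theta((\log N)^2)$.
\end{itemize}

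\emph{Anti-concentration.} You are right that optimism at the optimal region must be proved, but it cannot be imported from Agrawal--Goyal. Their argument rests on the $\mathrm{Beta}(S+1,F+1)$ posterior, whereas $C_3$ sets $\alpha=\eta\hat\mu$ and $\beta=\eta(1-\hat\mu)$ with no prior pseudo-counts. Take two arms at distance $D$ and suppose the rewards observed near the optimal one are all $0$. Then $\alpha$ there is exactly $0$ with the truncated kernel, or $O(T^2e^{-D^2/2h^2})$ with the Gaussian kernel; the latter is super-polynomially small for the theorem's $h$. The optimal arm's Thompson sample is then essentially $0$, it is essentially never replayed, and regret is linear with constant probability.

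So you must analyze a modified algorithm, for example $\alpha=1+\eta\hat\mu$ and $\beta=1+\eta(1-\hat\mu)$. The paper's Beta-tail lemma already requires $\alpha,\beta\ge1$, which the algorithm does not guarantee. You also need a kernel that couples only points in the same cell, and some treatment (or removal) of the adaptive importance weights, so that the posterior width $\eta^{-1/2}$ matches the estimator's real fluctuations.

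\emph{The paper's proof.} It asserts that, for a query in cell $g$, kernel sums involve only points of $g$ with $c_dN_g\le\eta\le N_g$, and it absorbs the weights into constants. Radius-$h$ truncation gives neither: the support ball crosses cell faces, and for $d\ge2$ it misses parts of a cell of diameter $\sqrt d h$. The paper also bounds the optimal-cell ($m$-dependent) terms by a constant and then drops them, so optimism at $g^*$ is never established.
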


Proof of Theorem \ref{thm:regret} can be found in Appendix \ref{apdx:regretproof}.

\section{Experiments}\label{sec:experiments}

Section \ref{sec:sim} is the only experiment using synthetic data to demonstrate the hypothesis between coupling and embedding distance. Sections \ref{sec:cbexp} and \ref{sec:newsrec} use real-world datasets.

\subsection{Coupled Arms Simulation}\label{sec:sim}

The following simulated example demonstrates that $\phi$ can capture the notion of coupling. Recall coupled arms generalize correlated arms by ensuring that correlation persists over time.

Suppose there is a set of arms $\{a_0, a_1, ..., a_6\}$. We call $a_0$ the \textit{anchor arm} where the corresponding reward distribution is $\text{Bernoulli}(\mu_0)$. At time $t$, $\mu_0$ is randomly sampled from a $\text{Uniform}(0, 1)$ distribution. Then, the remaining $\mu_i$ for the rest of the arms are sampled such that they are either positively or negatively correlated with $\mu_0$. The chosen degree of correlation is fixed for any time for all arms with respect to the anchor arm. Arms can be sampled to obtain $(a_{it}, r_{it})$ pairs to learn $\mu_i$ until the end of the episode where this entire process repeats for time $t+1$.

Complete details of the generation of coupled arms are described in Supplementary Material \ref{apdx:gencorrarms}. The chosen true correlations for arms $a_1, ..., a_6$ are $\text{-}1.0, \text{-}0.6, \text{-}0.2, 0.2, 0.6, 1.0$ respectively. For example, since $a_1$ is strongly negatively correlated to the anchor, if $\mu_0 = 0.9$ then it is very likely that $\mu_1 \approx 0.1$. On the other hand, $\mu_5$ would be in the vicinity of 0.9.

All $\mu_i$'s are sampled 200 times where for each time, 100 reward samples are collected. These reward samples are used to train $\phi$ using Algorithm \ref{alg:train} with $T=200$ time intervals. A summary of arm embeddings is visualized in Figure \ref{fig:nonctxcorr}. It follows the expectation where the more correlated $a_i$ is to $a_0$, the distance to $a_0$ is lower, and vice versa. To view one example of the spatial positioning of the arm embeddings in a scatter plot, see Supplementary Material \ref{apdx:armscatter}.

\begin{figure}
	\centering
	\includegraphics[width=0.35\textwidth]{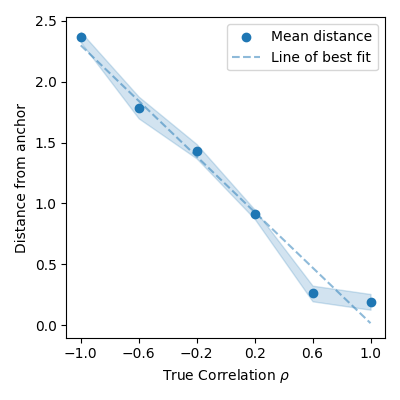}
	\caption{Distance from the anchor arm embedding as a function of correlation $\rho$ with 1.96 sigma error bars over 10 random seeds.}
	\label{fig:nonctxcorr}
\end{figure}

\subsection{Contextual Bandit Experiments}\label{sec:cbexp}

This experiment demonstrates the efficacy of \alg on the problem described in Section \ref{sec:cb}. In the footsteps of the work by \cite{zhou2020neural}, we evaluate using the same datasets from the OpenML repository by \cite{vanschoren2014openml}, namely \verb+shuttle+ \citep{king1995statlog}, \verb+MagicTelescope+ \citep{magic159}, \verb+covertype+ \citep{covertype31} and \verb+mnist+ \citep{lecun1998mnist}. For this set of experiments, the context space is the input space, and $\A$ is the corresponding label space. The reward is one if the selected arm matches the ground truth label, otherwise zero. Unlike \cite{zhou2020neural}, we do not standardize the inputs because we believe that gives the models some hindsight information which goes against the philosophy of multi-armed bandits. Note that this does not usually fit the typical setting of a bandit problem and \alg targets a more general problem.

\alg requires historical samples for $\phi$ to be trained, where historical samples are uncommon for bandit experiments but incredibly common for industry use cases. To ensure a fair comparison, we ensure that other baseline methods, such as LinUCB \citep{li2010contextual}, Thompson sampling for contextual bandits (LinTS) \citep{agrawal2013thompson}, SquareCB \citep{foster2020beyond}, NeuralUCB \citep{zhou2020neural}, and neural Thompson sampling (NeuralTS) \citep{zhang2021neural}, are given the same amount of information. All algorithms are updated using a subset for training and evaluated on a different test subset. This is necessary to avoid contamination when training $\phi$. The test split contains 1000 unseen samples. Appendix \ref{apdx:ablationsigma} contains an ablation study of \alg with different RBF bandwidth values.

\begin{figure*}
	\centering
	\includegraphics[width=0.8\textwidth]{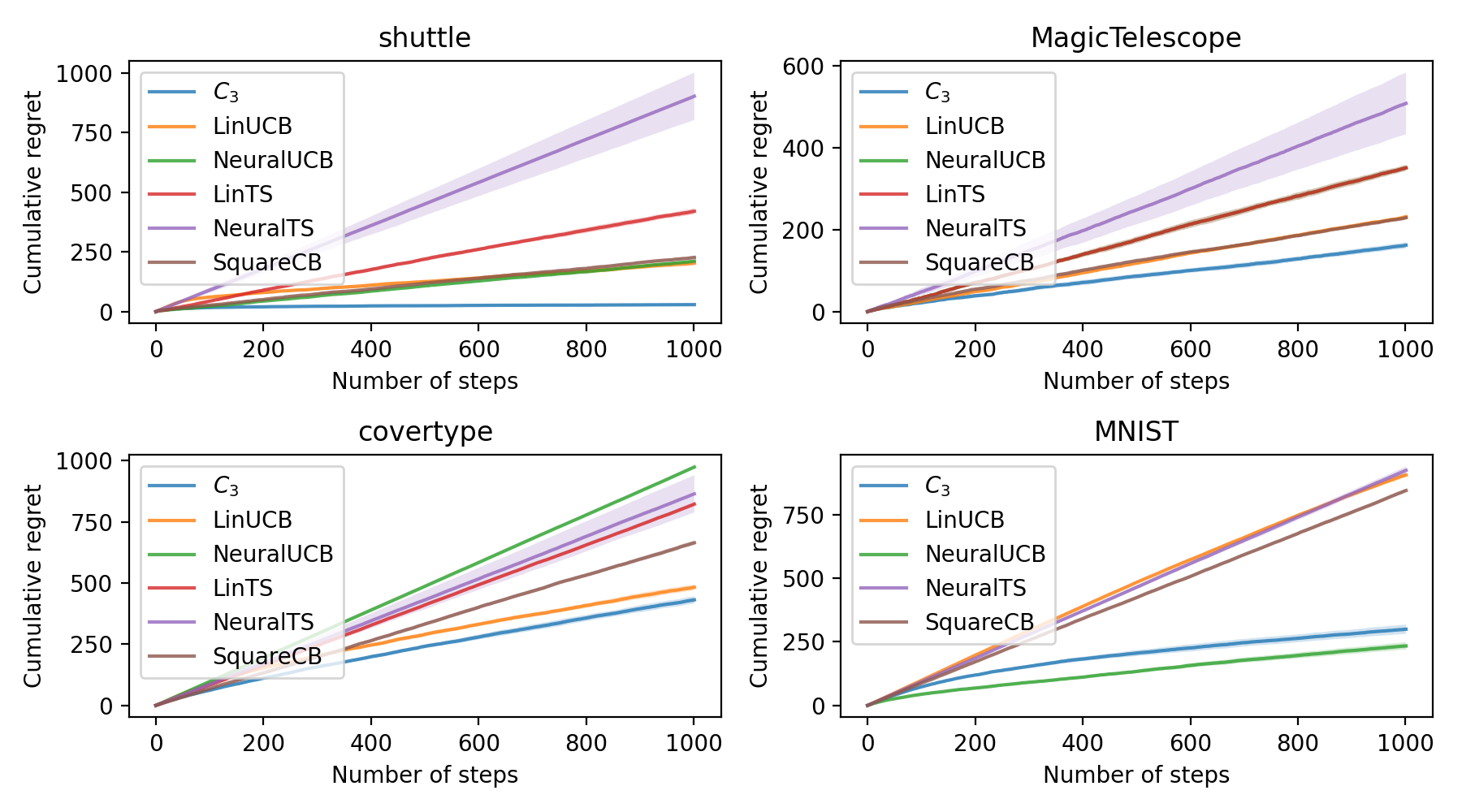}
	\caption{Cumulative regret of the test split of the four datasets with 1.96 sigma error bars over 10 random seeds. Note that in MNIST, LinTS cannot be computed due to numerical issues from the high dimensionality. In MagicTelescope, NeuralUCB and LinTS almost completely overlap because they all repeatedly exploit the same action after the initial steps.}
	\label{fig:cbanditcumres}
\end{figure*}

The results are shown in Figure \ref{fig:cbanditcumres}. \alg outperforms most of the algorithms in most datasets. \alg outperforms the next best algorithm for each dataset by 5.7\% lower cumulative regret on average. In \verb+shuttle+, \verb+MagicTelescope+, and \verb+covertype+, these problems are of lower dimensionality and more linearly separable hence LinUCB and SquareCB (with a linear regression oracle) perform well. On the other hand, NeuralUCB excels in the MNIST dataset since the multilayer perceptron works well with high-dimensional data.

\subsection{News Recommendation}\label{sec:newsrec}

\alg will be evaluated on news recommendation which is a realistic example for the problem described in Section \ref{sec:cbca}. This paper uses the Microsoft News Dataset (MIND) by \cite{wu2020mind} for evaluation. The context will be the frequency of a user's historical visits by news category. The arm space is the set of valid news articles to recommend. The reward is whether the user clicks the chosen news article. The objective of an algorithm is to minimize the cumulative regret. For clarity in results, regret is measured relative to the best performing algorithm where at time $t$, 0 is given to the best performing algorithm and the rest of the algorithms are given their respective relative regret.

We form dense representations of news articles from their titles using the pooler output of a BERT model \citep{devlin2019bert}. We use principal component analysis to reduce the dimensions to 64 to be used as arm embeddings. Whenever a user visits, there is a small collection of possible articles to recommend, up to eight articles, but the valid arm set varies for each user.

In this set of experiments, \alg will be configured so that every 100 steps, it will randomly remove approximately 20\% of samples in $\D_\text{ref}$ to account for the concept drift. The hypothesis is that because of the Lipschitz assumption, more recent samples would be more relevant in estimating the mean rewards.

Due to the rigidity of assumptions of other bandit algorithms tailored for theoretical results, some baseline algorithms in Section \ref{sec:cbexp} could not effectively target all complexities of the problem for various reasons. However, we add more specialized algorithms. We compare \alg with one of the most popular recent designs for recommender systems: the two-tower neural network. \cite{huang2020embedding} uses a deep encoder for query information and another deep encoder for item information then uses the cosine similarity between the two embeddings. A Gaussian process with forgetting is also included \cite{kaufmann2012bayesian}, where the forgetting is necessary since it could not handle the entire history and needs to account for the non-stationarity. A similar baseline is using a Bayesian linear regression model that takes a concatenated vector of user and article features and returns the mean and standard deviation of estimated rewards, similar to the Gaussian process. Linear and neural baselines from Section \ref{sec:cbexp} are modified to handle the non-stationarity elements. We also compare with the contextual restless bandit algorithm \cite{chen2024contextual}. Full experimental details are found in Supplementary Material \ref{apdx:minddetails}.

\begin{figure}
	\centering
	\includegraphics[width=0.45\textwidth]{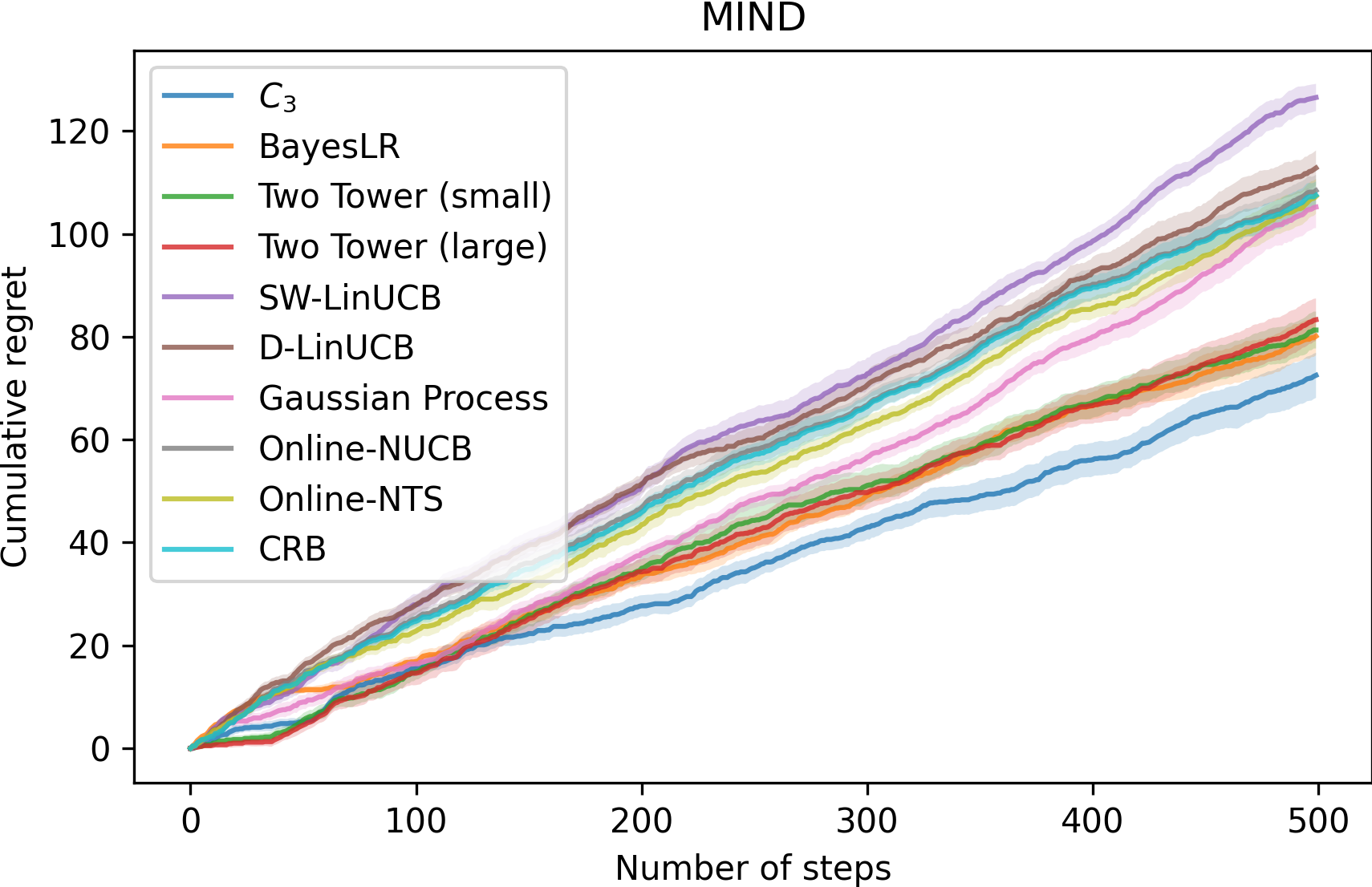}
	\caption{Cumulative regret of the MIND dataset with 1 sigma error bars over 10 random seeds. ``small" and ``large" refers to the relative number of parameters in the two tower models.}
	\label{fig:mindcumres}
\end{figure}

The result can be seen in Figure \ref{fig:mindcumres}. \alg demonstrates a click lift of 12.4\% compared to the baselines. Initially, \alg does not do as well as the two tower approaches (which are static). As the drift of click rate increases in magnitude, \alg begins to adapt to the drift as it removes older samples in $\D_\text{ref}$ while the other algorithms incur regret at the same rate.

\section{Discussions}
\label{sec:discussion}

\paragraph{Effective Data Utilization} Unlike typical contextual bandit settings, \alg requires the use of historical data to ``warm-start" the algorithm. In reality, advertisement campaigns, pricing schemes, etc. will involve some degree of human-designed policy at the initial stages which means there could be some data albeit possibly sub-optimal. Sub-optimality is not an issue for the training of $\phi$ since the main objective of $\phi$ is to learn reward distributions and coupling, not optimality. As a result, $\phi$ can effectively utilize samples from prior campaigns or trials in an off-policy manner. Furthermore, $\phi$ can be resilient to concept drifts so data from a different time period may still be utilized.

\paragraph{Generalization to Embedding Models} This paper demonstrates that Thompson sampling acting on an embedding space can offer a method of exploration. However, only a simple multi-layer perceptron is used as an embedding model. There should be no restrictions on the model design or inputs as long as it is a cost-sensitive regression model. A side effect of operating on an embedding model is the ability to visualize the learned embedding space which can be useful for applications that require some transparency/explainability.

\paragraph{Limitations of $\bm{C_3}$} The transductive learning aspect and importance weight updates can result in high numerical instability since it relies on many sum and division operations of floating points. This effectively disallows quantization to be used. Also, as an algorithm that relies on a dataset for inference, it may not scale to millions of points without some type of sampling if speed is crucial in the use case. The hyperparameter tuning of $\phi$ can be slightly challenging because the RBF kernel used in IWKR can result in vanishing gradients if points are too near or far from some query point, so the choice of bandwidth of the RBF kernel is important. More crucially, the performance of \alg is dependent on heuristics and whether the learned embedding space is well-calibrated.

\section{Conclusion and Future Works}

The design of the \alg algorithm sets an applied perspective of using a contextual bandit algorithm on bandits and recommendation problems. Contextual bandit algorithms have built-in exploration and online learning components while recommender systems have deep encoders that scale well with high dimensional data. By combining the best of worlds, we gain several advantages in practice such as the ability to handle non-stationarity from concept drift, no retraining needed, and leveraging arm features. While this work contributes to the practical side of bandit algorithms, future works should include obtaining a non-stationary bound that extends the stationary regret bound in Theorem \ref{thm:regret}.

\section*{Broader Impact Statement}

To be best of our knowledge, our work does not have a direct negative impact as it outlines an algorithm to dynamically learn patterns. External factors such as the data by others on the \alg algorithm, or the application of the algorithm on a malicious task would be out of our control.


\subsubsection*{Acknowledgments}
We thank Manulife Financial for the funding, guidance and exposure that motivated this work. Resources used in preparing this research were provided, in part, by the Province of Ontario, the Government of Canada through CIFAR, companies sponsoring the Vector Institute \url{https://vectorinstitute.ai/partnerships/current-partners/}, the Natural Sciences and Engineering Council of Canada and a grant from IITP \& MSIT of Korea (No. RS-2024-00457882, AI Research Hub Project).

\newpage

\bibliography{bibliography}
\bibliographystyle{tmlr}

\appendix
\section{Appendix}

\subsection{Memoization of Importance Weight Computation}
\label{apdx:imweights}

\textbf{Theorem 1} Suppose a vector of importance weights $\bm{w}$ of $n$ samples has been computed. The time complexity of updating the importance weights, given a new sample, is $\O(n)$.

\begin{proof}
	Suppose there are $n$ samples in the embedding space $\{s_i \in \S : i \in [n]\}$. Consider the kernel matrix $L \in \R^{n\times n}$ which holds all pairwise RBF kernel values between every sample. From Equation \ref{eqn:impweight}, we can deduce that the sum of the $i^\text{th}$ row of matrix $L$ will be $w(s_i)$, and similarly for the sum of the $i^\text{th}$ column since $L$ is symmetric. The importance weight of the initial reference dataset $\D_\text{ref}$ can be calculated this way which takes on average $\O(n)$ per sample.

	Suppose there is already the vector of importance weights for all samples in $\D_\text{ref}$ denoted as $\bm{w}^{(n)} = [w_1^{(n)} \cdots w_n^{(n)}] \in [0, 1]^n$. We want to obtain an efficient update equation for $\bm{w}^{(n+1)}$. Naively computing $L$ with a new sample will result in a $\O(n^2)$ time update. To update efficiently, memoization would be useful since $w_i^{(n)}$ itself stores the reciprocal of a sum. During inference, the unweighted RBF similarity score will need to be computed. This result can be stored, and is denoted as $\bm{z}^{(n+1)} = [\kappa(s_{n+1}, s_1) \cdots \kappa(s_{n+1}, s_n)]$.

	There are two steps here: update the existing $w_i^{(n)}$ for $i \in [n]$ and append $w_{n+1}^{(n+1)}$ to the new vector. To obtain $w_i^{(n+1)}$ for $i \in [n]$, the  update equation can be expressed as a function of its previous value
	\begin{align*}
		w_i^{(n+1)} & = \frac{1}{\sum_{j=1}^{n+1} \kappa(s_i, s_j)}\\
		& = \frac{1}{\kappa(s_i, s_{n+1}) + \sum_{j=1}^n \kappa(s_i, s_j)}\\
		& = \frac{1}{z_i^{(n+1)} + \frac{1}{w_i^{(n)}}}
	\end{align*}
	which is a constant time operation for each $i \in [n]$ since all of the required values have already been computed. To compute the new importance weight,
	\begin{align*}
		w_{n+1}^{(n+1)} & = \frac{1}{\sum_{j=1}^{n+1} \kappa(s_{n+1}, s_j)}\\
		& = \frac{1}{\kappa(s_{n+1}, s_{n+1}) + \sum_{j=1}^n \kappa(s_{n+1}, s_j)}\\
		& = \frac{1}{1 + \sum_{j=1}^n z_j^{(n+1)}}
	\end{align*}
	which is an $\O(n)$ time operation for the new sample. Therefore, the entire update equation for the importance weight given a new sample is a linear time operation.
\end{proof}

\subsection{Proof of Importance Weighted Kernel Regression Approximation}
\label{apdx:iwkrproof}

\textbf{Theorem 2} Suppose $\mu(s)$ is Lipschitz continuous on $\S$. In the limit of the size of the reference dataset $\D_\text{ref}$ where points are sufficiently sampled from the neighbourhood of some query point $s$, the importance weighted kernel regression with a radial basis function kernel is an approximate estimator of $\mu(s)$.

\begin{proof}
	The importance weighted kernel regression estimator of $\mu$ with a RBF kernel is defined in Equation \ref{eqn:iwkr}. As the number of samples approaches $n \to \infty$ over a space centered at $s$, the importance weight converges to the inverse of the sampling distribution. The effective contribution of each neighbouring subspace becomes approximately uniform and the estimate becomes
	\begin{align*}
		\hat\mu(s) & = \frac{\int_\S \kappa(s, s') R(s') \d s'}{\int_\S \kappa(s, s') \d s'}\\
		& = \int_\S \frac{\kappa(s, s')}{\int_\S \kappa(s, s'') \d s''} R(s') \d s'
	\end{align*}

	Now, we focus on the fractional term and show that it is simply the density of a Gaussian distribution.
	\begin{align*}
		\frac{\kappa(s, s')}{\int_\S \kappa(s, s'') \d s''} & = \frac{\exp\left(-\frac{\|s - s'\|}{2\sigma^2}\right)}{\int_\S \exp\left(-\frac{\|s - s''\|}{2\sigma^2}\right) \d s''}\\
		& = \exp\left(-\frac{\|s - s'\|}{2\sigma^2}\right) \cdot \left((2\pi)^{-\frac{d}{2}} |\sigma|\right)\\
		& = \Pr(X = s') \quad \text{where } X \sim \mathcal{N}(s, \sigma^2 I)
	\end{align*}

	Since we know that the conditional reward distribution is defined as $R \sim \text{Bernoulli}(\mu(s))$, for a sufficiently small RBF kernel bandwidth $\sigma$, under the Lipschitz continuity assumption,
	\begin{align*}
		\hat\mu(s) & = \int_S P(X = s') R(s') \d s'\\
		& \approx \E[R' | S = s]\\
		& = \mu(s)
	\end{align*}
\end{proof}

\subsection{Proof of Regret Bound in Theorem \ref{thm:regret}}
\label{apdx:regretproof}

For a fixed embedding map $\phi(c, a)$ trained offline, $S \subset [0, 1]^d$ is the learned joint embedding space, and we analyze directly on $S$. Assume that the true mean reward function $\mu: S \to [0, 1]$ is Lipschitz. There exists $L > 0$ such that for all $s, s' \in S$,
\begin{align*}
	|\mu(s) - \mu(s')| \leq L\|s - s'\|
\end{align*}

Cumulative regret is formalized as follows. Let $s_t^* = \argmax_{s \in S_{A_t}} \mu(s)$ be an optimal action, where $S_A$ is the set of embeddings of all valid arms at time $t$. The expected cumulative regret is
\begin{align*}
	R_T = \sum_{t=1}^T \mu(s_t^*) - \mu(s_t)
\end{align*}

We standardize the notions. The RBF kernel $\kappa_h(\cdot, \cdot)$ has bandwidth $h > 0$. The reference dataset $D_{t-1} = \{(s_i, r_i)\}_{i=1}^{t-1}$ stores all embedding-reward tuples up to time $t-1$. The values of the Beta parameters $\alpha_{t-1}(s)$ and $\beta_{t-1}(s)$ are obtained by comparing a query embedding $s \in S$ to every embedding in the reference dataset, and their sum is the kernel mass
\begin{align*}
	\eta_{t-1}(s) = \sum_{i=1}^{t-1} \kappa_h(s, s_i)
\end{align*}

The IWKR procedure samples from the constructed Beta distribution
\begin{align*}
	\Theta_t(s) \sim \text{Beta}(\alpha_{t-1}(s), \beta_{t-1}(s))
\end{align*}
and plays $s_t = \argmax_{s \in S_{A_t}} \Theta_t(s)$.

For the analysis of \alg, we use a truncated RBF kernel (compact support) for exact locality.
\begin{align*}
	\kappa_h(s, s') := \exp\left(-\frac{\|s - s'\|^2}{2h^2}\right) \mathds{1}\{\|s - s'\|\leq h\}
\end{align*}

\begin{proof}
	Let $S \subseteq [0, 1]^d$ without loss of generality by rescaling if necessary. Partition $[0, 1]^d$ into axis-aligned hypercubes (cells) of length $h$. Let $\mathcal{G}$ denote the set of all such cells intersecting $S$. The number of cells satisfies
	\begin{align}
		|\mathcal{G}| \leq \left\lceil \frac{1}{h} \right\rceil^d \leq \left(\frac{2}{h}\right)^d \quad \text{ for } h \leq 1 \label{eqn:gridsize}
	\end{align}

	For each cell $g \in \mathcal{G}$, fix a representative point $x_g \in g \cap S$, i.e. any point in the intersection. Define the cell index of any point $s \in S$ as $g(s) \in \mathcal{G}$, and the best cell representative $g^*$ as
	\begin{align*}
		g^* = \argmax_{g \in \mathcal{G}} \mu(x_g)
	\end{align*}

	Because $s^* \in S$ lies in some cell $g(s^*)$, the cell representative $x_{g(s^*)}$ is at distance of at most the cell diameter
	\begin{align*}
		\|s^* - x_{g(s^*)}\| \leq \sqrt{d} h
	\end{align*}

	By the Lipschitz property,
	\begin{align*}
		\mu(s^*) - \mu(x_{g(s^*)}) \leq L \sqrt{d} h
	\end{align*}

	Since $g^*$ maximizes $\mu(x_g)$ over $g$, we have $\mu(x_{g^*}) \geq \mu(x_{g(s^*)})$. Hence,
	\begin{align*}
		\mu(s^*) - \mu(x_{g^*}) \leq \mu(s^*) - \mu(x_{g(s^*)}) \leq L \sqrt{d} h
	\end{align*}

	Decomposing regret gives
	\begin{align*}
		\mu(s^*) - \mu(s_t) & = (\mu(s^*) - \mu(x_{g^*})) + (\mu(x_{g^*}) - \mu(s_t))\\
		& \leq L\sqrt{d} h + (\mu(x_{g^*}) - \mu(s_t)) \numbereqn\label{eqn:discretizationregret}
	\end{align*}

	Up to this point, we approximated the continuum optimum $s^*$ to the best cell representation at a regret cost of $O(Lh)$. The next step is to bound the second term, which is the regret relative to the best cell representation. To do so, we exploit the locality of the truncated kernel.

	For any query $s$, the kernel sums only involve past points $s_i$ from the same cell $g(s)$. Let the number of times we have played in cell $g$ up to time $t-1$ be
	\begin{align*}
		N_g(t-1) := |\{i \leq t - 1: g(s_i) = g\}|
	\end{align*}

	The cell's respective cumulative binary reward counts are
	\begin{align*}
		S_g(t-1) := \sum_{i \leq t-1: g(s_i) = g} r_i && F_g(t-1) := N_g(t-1) - S_g(t-1)
	\end{align*}

	Now, fix a query $s \in g$. The kernel mass is
	\begin{align*}
		\eta_{t-1}(s) & = \sum_{i=1}^{t-1} \kappa_h(s, s_i) = \sum_{i: g(s_i) = g} \kappa_h(s, s_i)
	\end{align*}

	Moreover, within a cell $g$, any two points have distance at most $\sqrt{d} h$, hence for $s, s_i \in g$,
	\begin{align*}
		\exp\left(-\frac{\|s - s_i\|^2}{2h^2}\right) \geq \exp\left(-\frac{d}{2}\right)
	\end{align*}

	We call the right hand side $c_d \in (0, 1)$. Therefore,
	\begin{align}
		c_d N_g(t-1) \leq \eta_{t-1}(s) \leq N_g(t-1) \label{eqn:cd}
	\end{align}

	\begin{lemma}\label{lemma:betatail}
		Let $X \sim \text{Beta}(\alpha, \beta)$ with $\alpha, \beta \geq 1$, and let $p = \alpha / (\alpha + \beta)$. Then for any $\varepsilon \in (0, 1-p)$,
		\begin{align*}
			\Pr(X \geq p + \varepsilon) \leq \exp(-2(\alpha + \beta - 1) \varepsilon^2)
		\end{align*}

		Similarly, for any $\varepsilon \in (0, p)$,
		\begin{align*}
			\Pr(X \leq p - \varepsilon) \leq \exp(-2(\alpha + \beta - 1) \varepsilon^2)
		\end{align*}
	\end{lemma}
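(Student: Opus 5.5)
The plan is to reduce the Beta tail to a Binomial tail and then apply Hoeffding's inequality. I will first treat integer $\alpha,\beta \geq 1$ and then say how to extend to real parameters. The real case is needed because $\alpha(s)=\eta(s)\hat\mu(s)$ in Equation \ref{eqn:alpha} is generally not an integer.

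\textbf{Integer case.} Set $n=\alpha+\beta-1$. I will use the Beta--Binomial identity: for $X\sim\text{Beta}(\alpha,\beta)$ and any $x\in[0,1]$,
\begin{align*}
\Pr(X\leq x)=\Pr(\text{Bin}(n,x)\geq \alpha).
\end{align*}
This holds because the $\alpha$-th order statistic of $n$ i.i.d. uniforms has law $\text{Beta}(\alpha,\beta)$.

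For the upper tail, put $x=p+\varepsilon<1$. Then
\begin{align*}
\Pr(X\geq p+\varepsilon)=\Pr(\text{Bin}(n,p+\varepsilon)\leq \alpha-1).
\end{align*}
The elementary inequality $(\alpha-1)/n\leq \alpha/(\alpha+\beta)=p$ holds, since it is equivalent to $-(\alpha+\beta)\leq -\alpha$. So the event requires the empirical mean of $n$ Bernoulli$(p+\varepsilon)$ trials to fall at least $\varepsilon$ below its expectation. Hoeffding's inequality then gives the bound $\exp(-2n\varepsilon^2)$.

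The lower tail is symmetric:
\begin{align*}
\Pr(X\leq p-\varepsilon)=\Pr(\text{Bin}(n,p-\varepsilon)\geq\alpha),
\end{align*}
and $\alpha/n\geq p$. The event is therefore a deviation of at least $\varepsilon$ above the mean, and Hoeffding again gives $\exp(-2n\varepsilon^2)$.

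\textbf{Real parameters.} This is the step I expect to be the main obstacle. The cleanest route I would try first is to invoke the known optimal sub-Gaussian constant for the Beta law. Marchal and Arbel show that $\text{Beta}(\alpha,\beta)$ is sub-Gaussian with variance proxy at most $1/(4(\alpha+\beta+1))$. The Chernoff bound then gives
\begin{align*}
\Pr(X-p\geq\varepsilon)\leq\exp(-2(\alpha+\beta+1)\varepsilon^2)\leq \exp(-2(\alpha+\beta-1)\varepsilon^2),
\end{align*}
and likewise for the lower tail. This route handles both tails and all real $\alpha,\beta>0$ at once, so the integer argument becomes an illustrative special case.

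If a self-contained argument is preferred, an alternative is to compare $\text{Beta}(\alpha,\beta)$ with Beta laws whose parameters are the integers $\lfloor\alpha\rfloor,\lceil\alpha\rceil$ and similarly for $\beta$, using stochastic monotonicity: $\text{Beta}(\alpha,\beta)$ is stochastically increasing in $\alpha$ and decreasing in $\beta$. The difficulty with this route is tracking how far the integer-parameter means move from $p$. Rounding perturbs the mean by $O(1/(\alpha+\beta))$, and absorbing this shift into $\varepsilon$ costs constants. For that reason I would rely on the sub-Gaussian result for the general statement.
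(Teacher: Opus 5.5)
Your integer-parameter argument is the paper's proof. The paper sets $n=\alpha+\beta-1$, uses $\Pr(X\ge x)=\Pr(Y\le \alpha-1)$ for $Y\sim\text{Binomial}(n,x)$, checks $\alpha-1\le np$ via $np=\alpha-p$, and applies Hoeffding's lower-tail bound. It writes out only the upper tail, whereas you do both.

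The genuine difference is the real-parameter step, which the paper never raises. Its Beta--Binomial comparison only makes sense when $\alpha$ and $\beta$ are integers. Yet the regret analysis applies the lemma to $\alpha(s)=\eta(s)\hat\mu(s)$ and $\beta(s)=\eta(s)(1-\hat\mu(s))$, which are generally not integers, so the paper's argument as written covers only the integer case. Your route closes this: you bound the optimal sub-Gaussian variance proxy by $\sigma^2_{\text{opt}}(\alpha,\beta)\le 1/(4(\alpha+\beta+1))$ (Marchal--Arbel) and then apply a Chernoff bound. This gives more than the lemma claims:
\begin{itemize}
\item both tails at once;
\item the sharper exponent $2(\alpha+\beta+1)\varepsilon^2$;
\item validity for all $\alpha,\beta>0$, so the hypothesis $\alpha,\beta\ge 1$ becomes unnecessary. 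That matters because the regret proof never verifies it, and $\eta(s)\hat\mu(s)$ can fall below $1$.
\end{itemize}
The cost is that the lemma then rests on a nontrivial external theorem, whose proof in the asymmetric case is analytic rather than combinatorial. The Beta--Binomial argument is fully elementary but integer-only.

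You are right not to rely on the stochastic-monotonicity fallback. The $O(1/(\alpha+\beta))$ mean shift from rounding the parameters does not directly give the stated constant.
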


	\begin{proof}
		Let $X \sim \text{Beta}(\alpha, \beta)$ and $Y \sim \text{Binomial}(n, x)$ where $n = \alpha + \beta - 1$. By writing the regularized incomplete beta function representation of the Beta CDF and comparing it to the Binomial tail expression, one can obtain
		\begin{align*}
			\Pr(X \geq x) = \Pr(Y \leq \alpha - 1)
		\end{align*}

		Take $x = p + \varepsilon$. Then $\E[Y] = n(p + \varepsilon)$. Note that $p \leq 1$, thus $\alpha - 1 \leq np$ because
		\begin{align*}
			np & = (\alpha + \beta - 1) \frac{\alpha}{\alpha + \beta}\\
			& = \alpha - \frac{\alpha}{\alpha + \beta}\\
			& = \alpha - p\\
			& \geq \alpha - 1
		\end{align*}

		Therefore, the event $Y \leq \alpha - 1$ implies $Y \leq np$
		\begin{align*}
			\Pr(X \geq p + \varepsilon) = \Pr(Y \leq \alpha - 1) \leq \Pr(Y \leq np)
		\end{align*}

		Note that $np = n(p + \varepsilon) - n\varepsilon = \E[Y] - n\varepsilon$. Applying the Hoeffding/Chernoff lower-tail bound for Binomial random variables gives
		\begin{align*}
			\Pr(Y \leq \E[Y] - n\varepsilon) \leq \exp(-2n\varepsilon^2) = \exp(-2(\alpha + \beta - 1) \varepsilon^2)
		\end{align*}

		Connecting both gives
		\begin{align*}
			\Pr(X \geq p + \varepsilon) \leq \Pr(Y \leq np) = \Pr(Y \leq \E[Y] - n\varepsilon) \leq \exp(-2(\alpha + \beta - 1) \varepsilon^2)
		\end{align*}
	\end{proof}

	Define the gap of a cell representative
	\begin{align*}
		\Delta_g := \mu(x_{g^*}) - \mu(x_g) \geq 0
	\end{align*}

	Cells with $\Delta_g = 0$ are optimal within the grid. We want to show that each suboptimal cell $g$ is selected only $O(\log T / \Delta_g^2)$ times in expectation. Fix a suboptimal cell $g$ with $\Delta_g > 0$. Consider the event at time $t$ that IWKR selects a point in cell $g$. This can happen only if the Beta sample for cell $g$ is unusually high or the Beta sample for the best cell $g^*$ is unusually low.

	Concretely, a good event refers to
	\begin{align*}
		E_g := \left\{\left|\frac{S_g}{N_g} - \mu(x_g)\right| \leq \frac{\Delta_g}{4}\right\}
	\end{align*}
	and similarly
	\begin{align*}
		E_{g^*} := \left\{\left|\frac{S_{g^*}}{N_{g^*}} - \mu(x_{g^*})\right| \leq \frac{\Delta_g}{4}\right\}
	\end{align*}

	By Hoeffding's inequality for Bernoulli sums, conditional on $N_g = n$,
	\begin{align*}
		\Pr(E_g^c | N_g = n) \leq 2\exp\left(-2n\left(\frac{\Delta_g}{4}\right)^2\right) = 2\exp\left(-\frac{n\Delta_g^2}{8}\right)
	\end{align*}

	Let $\Theta_{t,g}$ be the cell-level random variable for the Beta sample corresponding to any query point in cell $g$ at time $t$. They are all functions of the same within-cell data, only differing in constants via kernel weights. Its parameters satisfy $\alpha + \beta = \eta$ and with Equation \ref{eqn:cd}, we have $\eta \geq c_d N_g$.

	On event $E_g$, the empirical mean is close to $\mu(x_g)$ and because the posterior mean is equal to the IWKR mean, which (within a cell) is a weighted average of the $r_i$'s, we can bound the posterior mean deviation by the same scale (absorbing kernel-weight constants into a dimension constant). Denoting $\mathcal{F}_{t-1}$ as the history up to $t-1$,
	\begin{align*}
		|\E[\Theta_{t,g} | \mathcal{F}_{t-1}] - \mu(x_g)| \leq \frac{\Delta_g}{2}
	\end{align*}
	implies
	\begin{align*}
		\Pr\left(\Theta_{t,g} \geq \mu(x_g) + \frac{\Delta_g}{2} \mid \mathcal{F}_{t-1}\right) & \leq \exp\left(-2(\eta - 1)\left(\frac{\Delta_g}{2}\right)^2\right)\\
		& \leq \exp\left(-\frac{c_d n \Delta_g^2}{2}\right)
	\end{align*}
	when using Lemma \ref{lemma:betatail} with $\varepsilon = \Delta_g / 2$ and $\eta \geq c_d n$. Similarly, for the optimal cell $g^*$, on event $E_{g^*}$ with $N_{g^*} = m$,
	\begin{align*}
		\Pr\left(\Theta_{t,g^*} \leq \mu(x_{g^*}) - \frac{\Delta_g}{2} \mid \mathcal{F}_{t-1}\right) \leq \exp\left(-\frac{c_d m \Delta_g^2}{2}\right)
	\end{align*}

	Now, define the ``bad event'' that could cause selecting $g$
	\begin{align*}
		B_t(g) := \{\Theta_{t,g} \geq \Theta_{t, g^*}\}
	\end{align*}

	If $g$'s sample is below $\mu(x_g) + \Delta_g / 2$ and $g^*$'s sample is above $\mu(x_{g^*}) - \Delta_g / 2$, event $B_t(g)$ cannot occur because
	\begin{align*}
		\Theta_{t, g} & < \mu(x_g) + \frac{\Delta_g}{2}\\
		& = \mu(x_{g^*}) - \frac{\Delta_g}{2} && (\Delta_g = \mu(x_{g^*}) - \mu(x_g))\\
		& < \Theta_{t, g^*}
	\end{align*}

	Therefore,
	\begin{align*}
		B_t(g) \subseteq \left\{\Theta_{t,g} \geq \mu(x_g) + \frac{\Delta_g}{2}\right\} \cup \left\{\Theta_{t,g^*} \leq \mu(x_{g^*}) - \frac{\Delta_g}{2}\right\}
	\end{align*}

	Applying the union bound on $B_t(g)$ gives
	\begin{alignat*}{2}
		\Pr(B_t(g) | N_g = n, N_{g^*} = m) & \leq \Pr(E_g^c | N_g = n) + \Pr(E_{g^*}^c | N_{g^*} = m) && + \Pr\left(\Theta_{t,g} \geq \mu(x_g) + \frac{\Delta_g}{2} \mid \mathcal{F}_{t-1}\right) \\
		& && + \Pr\left(\Theta_{t,g^*} \leq \mu(x_{g^*}) - \frac{\Delta_g}{2} \mid \mathcal{F}_{t-1}\right)\\
		& \leq 2\exp\left(-\frac{n\Delta_g^2}{8}\right) + 2\exp\left(-\frac{m\Delta_g^2}{8}\right) && + \exp\left(-\frac{c_d n \Delta_g^2}{2}\right) + \exp\left(-\frac{c_d m \Delta_g^2}{2}\right) \\
	\end{alignat*}

	Since $m \geq 0$, the terms involving $m$ only help. To upper bound, we drop them and keep only the $n$-dependent decay
	\begin{align*}
		\Pr(B_t(g) | N_g = n, N_{g^*} = m) \leq 2\exp\left(-\frac{n\Delta_g^2}{8}\right) + \exp\left(-\frac{c_d n \Delta_g^2}{2}\right) + 3
	\end{align*}

	For it to be a useful bound, we want to set a rule where once $n$ exceeds a logarithmic threshold, the probability of selecting $g$ becomes at most $1/T^2$. Let
	\begin{align*}
		n_g := \left\lceil \frac{16}{c_d \Delta_g^2} \log T \right\rceil
	\end{align*}

	Then for all $n \geq n_g$,
	\begin{align*}
		\exp\left(-\frac{c_d n \Delta_g^2}{2}\right) & \leq \exp\left(-\frac{c_d n_g \Delta_g^2}{2}\right)\\
		& \leq \exp(-8 \log T)\\
		& = T^{-8}
	\end{align*}
	and similarly $\exp(-n\Delta_g^2 / 8) \leq T^{-2}$ for a suitable constant. Hence, for $n \geq n_g$,
	\begin{align}
		\Pr(B_t(g) \mid N_g = n) \leq C_1 T^{-2}\label{eqn:btbound}
	\end{align}
	for some constant $C_1$ depending only on $d$.

	To bound the expected number of times cell $g$ is selected, we split the sum according to whether $N_g(t-1) < n_g$ or $N_g(t-1) \geq n_g$
	\begin{align*}
		\E[N_g(T)] & = \sum_{t=1}^T \Pr(g(s_t) = g)\\
		& \leq n_g + \sum_{t=1}^T \Pr(g(s_t) = g, N_g(t-1) \geq n_g)
	\end{align*}

	On $\{N_g(t-1) \geq n_g\}$, selecting $g$ implies $B_t(g)$. So,
	\begin{align*}
		\Pr(g(s_t) = g, N_g(t-1) \geq n_g) & \leq \Pr(B_t(g), N_g(t-1) \geq n_g) && (\{g(s_t) = g\} \subseteq B_t(g))\\
		& = \E[\Pr(B_t(g) | \mathcal{F}_{t-1}) \mathds{1}\{N_g(t-1) \geq n_g\}] && \text{($\mathcal{F}_{t-1}$ measurable/non-measurable split)}\\
		& \leq \E[C_1 T^{-2} \cdot \mathds{1}\{N_g(t-1) \geq n_g\}] && (\text{result from Equation \ref{eqn:btbound}})\\
		& \leq C_1 T^{-2}
	\end{align*}

	Therefore, with $C_1/T \leq 1$ for $T \geq 3$,
	\begin{align*}
		\E[N_g(T)] \leq n_g + TC_1 T^{-2} = n_g + C_1 T^{-1}\leq 2n_g
	\end{align*}
	and finally substituting the value of $n_g$ gives
	\begin{align*}
		\E[N_g(T)] \leq C_2 \frac{\log T}{\Delta_g^2}
	\end{align*}
	for each suboptimal cell $g$ where $C_2$ only depends on $d$.

	To summarize, the cumulative regret relative to the best cell is
	\begin{align*}
		\E\left[\sum_{i=1}^T \mu(x_{g^*}) - \mu(s_t)\right] & = \sum_{g \in \mathcal{G}} \Delta_g \E[N_g(T)]\\
		& \leq \sum_{g: \Delta_g > 0} \Delta_g \cdot C_2 \frac{\log T}{\Delta_g^2}\\
		& = C_2 \log T \sum_{g: \Delta_g > 0} \frac{1}{\Delta_g} \numbereqn\label{eqn:gapdependent}
	\end{align*}

	Next, we bound the worst case control of the gap-dependent term via an $\varepsilon$-split. For any threshold $\varepsilon > 0$, split cell into: near optimal $\mathcal{G}_{\leq \varepsilon} = \{g: \Delta_g \leq \varepsilon\}$, and clearly suboptimal $\mathcal{G}_{>\varepsilon} = \{g: \Delta_g > \varepsilon\}$. Trivially, regret from near-optimal cells is at most $T\varepsilon$ because each play can be upper bounded by $\varepsilon$ regret.
	\begin{align*}
		\sum_{g: \Delta_g \leq \varepsilon} \Delta_g \E[N_g(T)] \leq \varepsilon \sum_g \E[N_g(T)] = \varepsilon T
	\end{align*}

	For clearly suboptimal cells, using Equation \ref{eqn:gapdependent},
	\begin{align*}
		\sum_{g: \Delta_g > \varepsilon} \Delta_g \E[N_g(T)] & \leq C_2 \log T \sum_{g:\Delta_g > \varepsilon} \frac{1}{\Delta_g}\\
		& \leq C_2 \log T \sum_{g: \Delta_g > \varepsilon} \frac{1}{\varepsilon}\\
		& \leq C_2 \log T \frac{|\mathcal{G}|}{\varepsilon}\\
		& \leq C_3 \log T \frac{h^{-d}}{\varepsilon} && \text{using Equation \ref{eqn:gridsize}}
	\end{align*}
	for some constant $C_3$.

	Combining results from the $\varepsilon$ splits,
	\begin{align*}
		\E\left[\sum_{t=1}^T \mu(x_{g^*}) - \mu(s_t)\right] \leq \varepsilon T + C_3 \log T \frac{h^{-d}}{\varepsilon}
	\end{align*}

	Since within one cell, the Lipschitz variation is $O(Lh)$, the grid optimum $\mu(x_{g^*})$ is only meaningful up to $Lh$, so we set $\varepsilon = Lh$. Upon substitution, the cumulative regret relative to the best cell is
	\begin{align}
		\E\left[\sum_{t=1}^T \mu(x_{g^*}) - \mu(s_t)\right] & \leq LhT + C_3 \log T \frac{h^{-(d+1)}}{L} \numbereqn\label{eqn:localityregret}
	\end{align}

	Continuing from Equation \ref{eqn:discretizationregret}, we can begin to form the overall cumulative regret bound.
	\begin{align*}
		\E[R_T] & = \sum_{t=1}^T \mu(s^*) - \mu(x_{g^*}) + \sum_{t=1}^T \mu(x_{g^*}) - \mu(s_t)\\
		& \leq TL\sqrt{d}h + \sum_{t=1}^T \mu(x_{g^*}) - \mu(s_t) && \text{from Equation \ref{eqn:discretizationregret}}\\
		& \leq C_4 LhT + C_3 \log T \frac{h^{-(d+1)}}{L} && \text{from Equation \ref{eqn:localityregret} and } C_4 := 1+\sqrt{d}\\
	\end{align*}

	What remains is to optimize the bandwidth with a schedule so as to obtain a useful regret bound. By finding the stationary point,
	\begin{align*}
		f(h) & := C_4 LhT + C_3 \log T \frac{h^{-(d+1)}}{L}\\
		f'(h) & = C_4 LT - C_3 \frac{\log T}{L} (d+1) h^{-(d+2)} = 0
	\end{align*}
	we obtain the optimal value of $h$
	\begin{align*}
		h = \left(\frac{\log T}{L^2 T}\right)^{\frac{1}{d+2}}
	\end{align*}

	Plugging $h$ into the regret bound gives
	\begin{align*}
		\E[R_T] \leq C L^{\frac{d}{d+2}} T^\frac{d+1}{d+2} (\log T)^\frac{1}{d+2}
	\end{align*}

	This is a standard nonparametric Lipschitz-type rate $T^\frac{d+1}{d+2}$ with a mild $(\log T)^\frac{1}{d+2}$ factor.
\end{proof}

\subsection{Generation of Correlated Arms}
\label{apdx:gencorrarms}

\begin{figure}[H]
	\centering
	\includegraphics[width=0.8\textwidth]{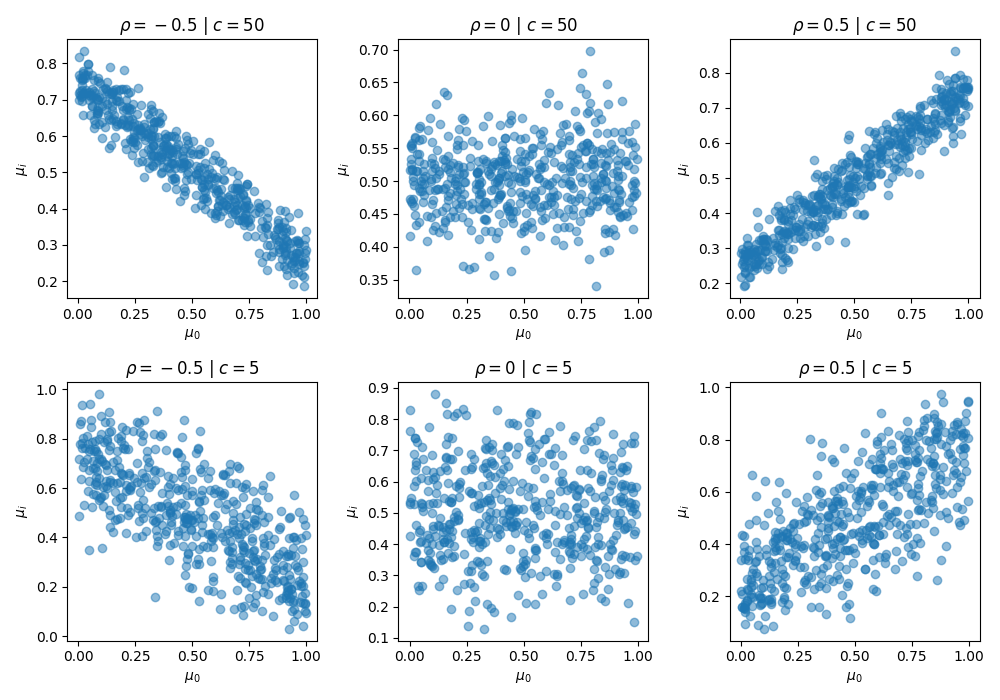}
	\caption{Visualization of impacts of various correlation $\rho$ and $c$ parameters on the $\mu$ of non-anchor arms. Each subplot contains 500 random samples.}
	\label{fig:corrvismatrix}
\end{figure}

Suppose $\mu_0$ has been sampled from the $\text{Uniform}(0, 1)$ distribution. Now, suppose we want to generate the reward distribution for arm $i$ with a correlation $\rho_i$ with $\mu_0$. The mean parameter for arm $i$ is generated as follows:
\begin{align}
	\mu_i \dist{\text{sample}} \text{Beta}(2c(\rho_i(\mu_0 - 0.5) + 0.5), 2c(\rho_i(0.5 - \mu_0) + 0.5))
\end{align}
where $c \geq 1$ is a hyperparameter that controls the variance of the reward distribution of the correlated arm; the higher $c$ is, the lower the variance. In our experiments, the value of $c$ is chosen to be 50. Examples of sampled correlated $\mu_i$'s can be found in Figure \ref{fig:corrvismatrix}.

\subsection{Correlation Experiment Details}
\label{apdx:cordetails}

The training dataset consists of 200 random but correlated values of $\mu_i$'s, and 100 samples of arm-reward pairs with arms being uniformly sampled. $\phi$ is a multilayer perceptron that takes a 7-dimensional vector (one-hot encoded for each arm including the anchor arm), has a single hidden layer of size 256 and an output dimension of 2. Each set of weights is interleaved with a Softplus activation layer. The bandwidth parameter of the RBF kernel is chosen to be $\sigma = 1$.

The loss function for $\phi$ is chosen to be $\L(\hat\mu, r) = \L_\text{BCE}(\hat\mu, r) + 5\L_\text{ECE}(\hat\mu, r)$, where the ECE loss uses 5 bins. For each of the 4 training epochs, 50\% of the entire training dataset is sampled to be used for training, of which 20\% will be used as the reference dataset and the remaining 80\% contains the queries. The learning rate is $10^{-3}$ (Adam optimizer with default configurations) with an exponential decay rate of 0.99 per epoch. Since no validation dataset is used, the resultant model of the final epoch will be used.

\subsection{Scatter plot of Correlated Arm Embeddings}
\label{apdx:armscatter}

\begin{figure}[H]
	\centering
	\includegraphics[width=1.0\textwidth]{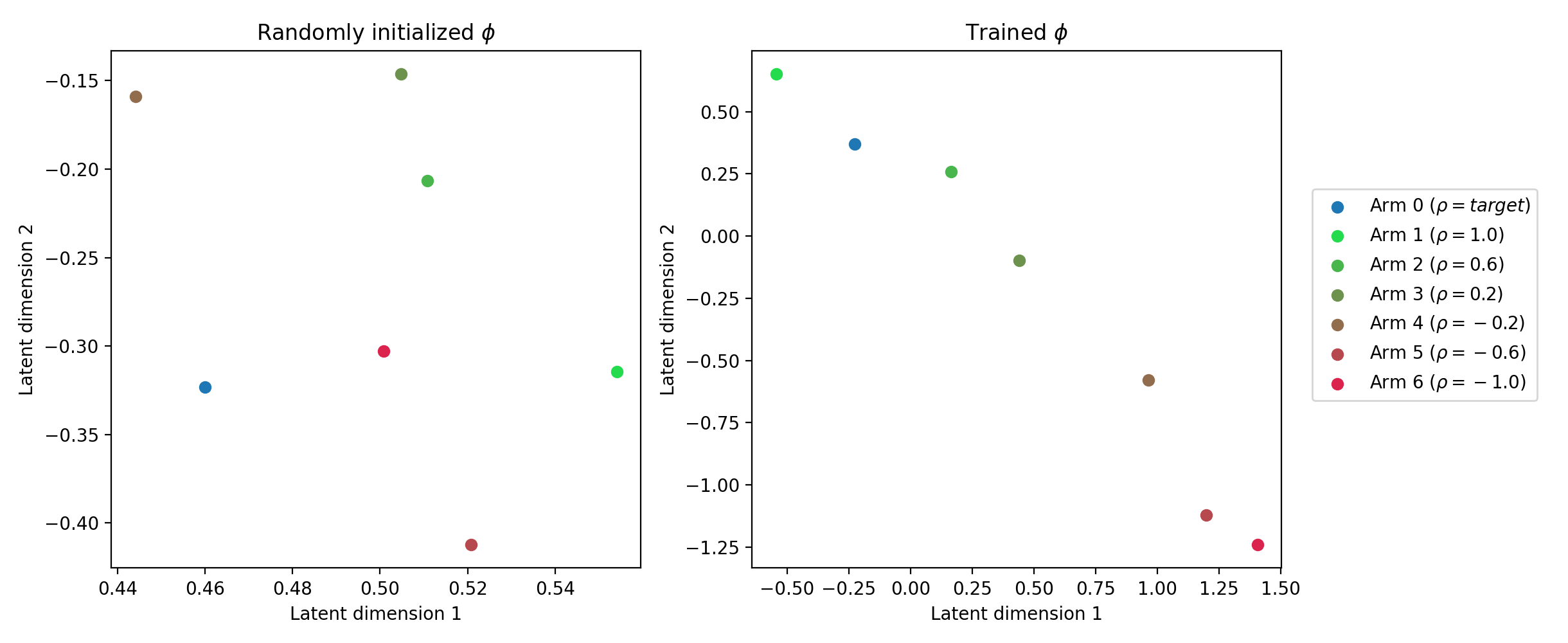}
	\caption{The left figure shows the arm embeddings prior to any fitting while the right figure shows the arm embeddings after training using Algorithm \ref{alg:train}. The blue point is the anchor point and the green points are positively correlated to the $\mu$ of the blue point while the red points are negatively correlated.}
	\label{fig:armscatter}
\end{figure}

\subsection{Contextual Bandit Experiment Details}
\label{apdx:cbanditdetails}

The four datasets were obtained using scikit-learn's API by \cite{sklearnapi}. There is minimal data preprocessing done in this set of experiments: converting the labels to one-hot representations and converting MNIST's pixel values from 8-bit unsigned integers to floating points between 0 and 1.

Only 50\% of all datasets were used because of two reasons: (1) using all takes a long time to evaluate especially for NeuralUCB and NeuralTS, (2) the additional 50\% during evaluation would only show a longer ``linear" portion in the cumulative regret curve since contextual bandit algorithms tend to be unable to practically avoid any mistakes.

All algorithms are given 4 samples to update, followed by 1 evaluation sample (which is also used to update the algorithm). This repeats until 1000 evaluation samples are provided. The datasets are shuffled at the beginning of each of the 10 experiments, so a different 4000 training samples and 1000 evaluation samples are used each time. For the offline training of $\phi$, the training split was used to optimize for $\phi$. This explains why there must be training and evaluation split in this set of experiments.

Let $d_c$ be the dimension of the context vector and $d_a$ be the dimension of the arm vectors. Due to the varying complexities of each dataset, we have to vary the number of layers. Each set of weights is interleaved with a Softplus activation layer. Consider the layers $[\ell_1, ..., \ell_m]$ where $\ell_i$ indicates the size of each layer, and the leftmost and rightmost elements in the array represent the input and output dimensions respectively.
\begin{enumerate}
	\item \verb+shuttle+: $[d_c + d_a, 32, 4]$
	\item \verb+MagicTelescope+: $[d_c + d_a, 64, 8]$
	\item \verb+covertype+: $[d_c + d_a, 64, 8]$
	\item \verb+mnist+: $[d_c + d_a, 64, 8]$
\end{enumerate}

The following hyperparameters are the same for all \alg experiments of every dataset. The RBF bandwidth is $\sigma = 1$ and the loss functions $\L_\text{BCE}(\hat\mu, r) + 2\L_\text{ECE}(\hat\mu, r)$ where the ECE loss uses 5 bins. The learning rate is set to $10^{-3}$ (Adam optimizer with default configurations) with an exponential decay rate of 0.99 per epoch. The batch size is 16. During each epoch, 10\% of the entire training split is sampled to be used for training, of which 20\% will be used as the reference dataset and the remaining 80\% contains the queries. There is no partitioning of the reference dataset since this is a stationary problem.

The implementation of LinUCB and LinTS was obtained from a package called \verb+striatum+, while the implementation of NeuralUCB and NeuralTS was obtained directly from the GitHub repository of the authors. SquareCB was obtained from Coba, which uses a linear model as a regression oracle. In LinUCB, we selected $\alpha$ to be 1.96 which controls the exploration factor. In LinTS, we select the default configurations with $\delta = 0.5$ and $R = 0.01$ which are the parameters used in the theoretical regret analysis. For epsilon, it was set to the reciprocal of the number of steps which is the recommended value. For NeuralUCB, we followed the exact hyperparameters that were used in their paper which is $\nu = 10^{-5}$ and $\lambda = 10^{-5}$ \citep{zhou2020neural}. For NeuralTS, we set $\nu = 10^{-5}$ and $\lambda = 10^{-5}$ which is obtained from \cite{zhang2021neural}'s repository. The update schedule for NeuralUCB and NeuralTS is as follows: for the first 2000 steps, the gradient descent optimization is performed for every step. Afterwards, it is performed only once every 200 steps.

\subsection{Scatterplot of MNIST Context-Arm Embeddings}
\label{apdx:mnistscatter}

\begin{figure}[H]
	\centering
	\includegraphics[width=\textwidth]{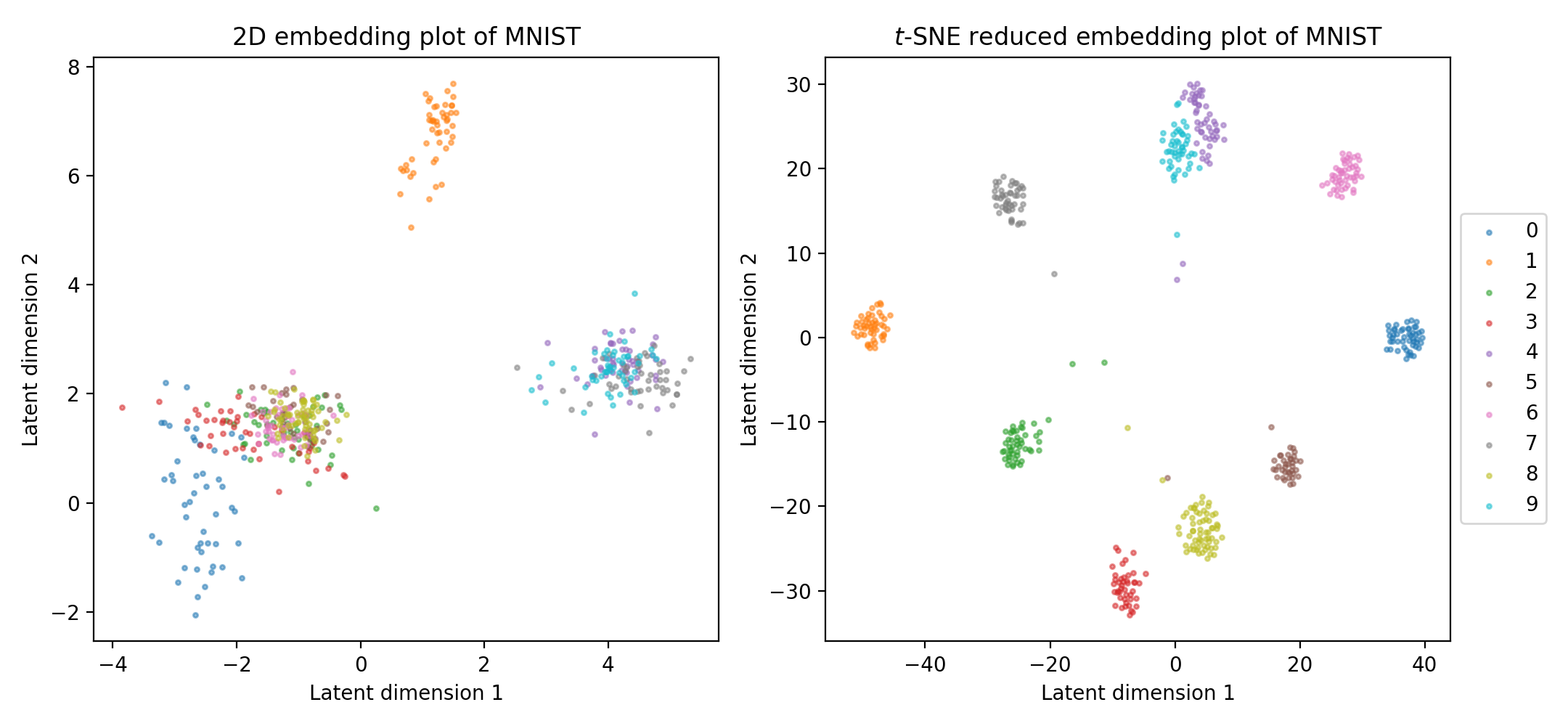}
	\caption{Embedding space of MNIST digits with correct arms chosen where the left shows the embedding vectors of $\phi$ with an output dimension of 2, and the right shows the embedding vectors of another $\phi$ with an output dimension of 8 but uses $t$-SNE \citep{van2008visualizing} for visualization.}
	\label{fig:mnistscatter}
\end{figure}

\subsection{Ablation on RBF Bandwidth in \alg}
\label{apdx:ablationsigma}

\begin{figure}[H]
	\centering
	\includegraphics[width=0.7\textwidth]{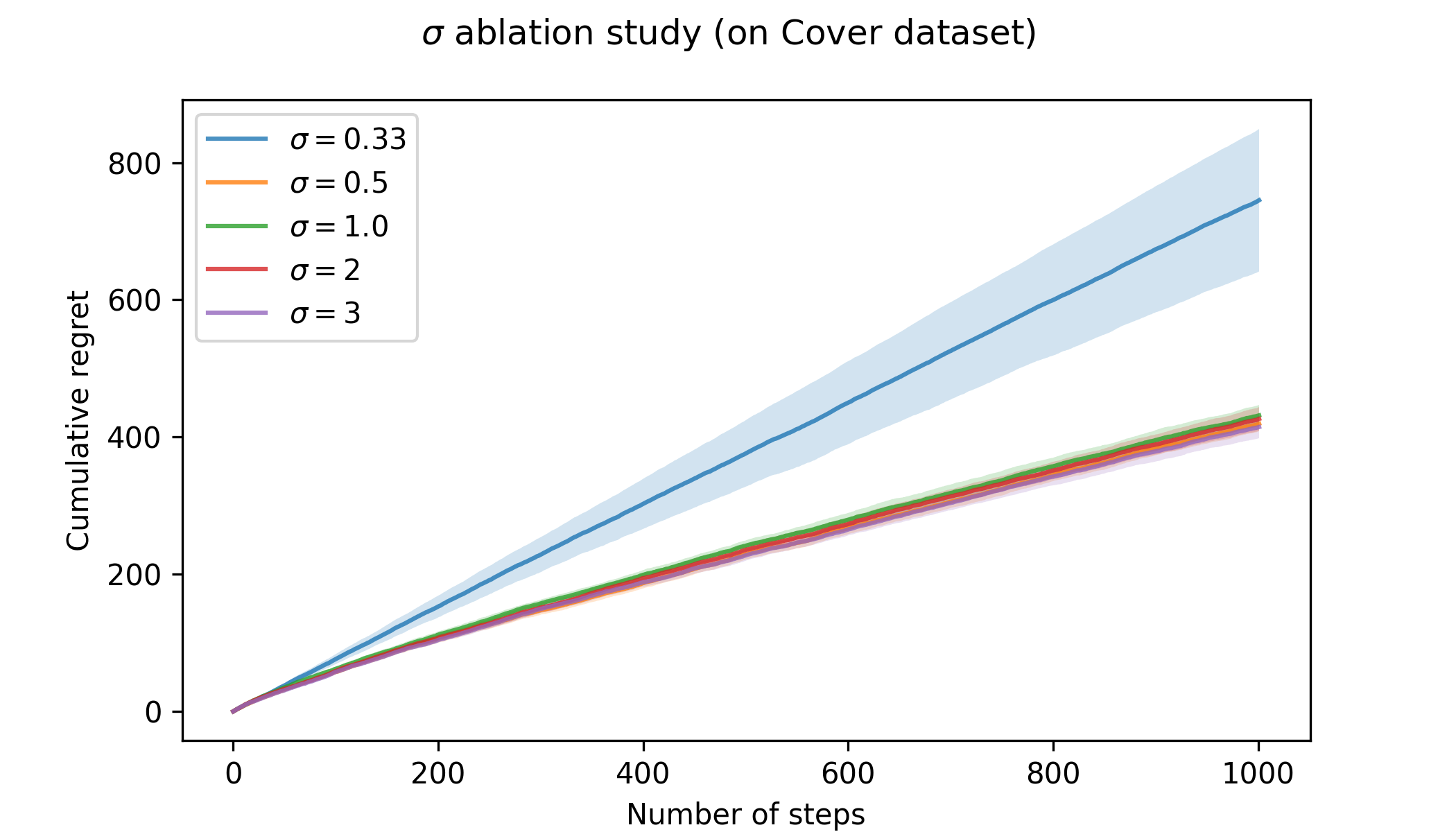}
	\caption{Cumulative regret of \alg with varying $\sigma$ parameter (for the RBF kernel) on the Cover dataset with 1.96 sigma error bars over 10 random seeds.}
	\label{fig:ablationsigma}
\end{figure}

To demonstrate the effects of varying the RBF kernel, we conduct an ablation study to identify: (1) possible failure cases, and (2) potential robustness across parameters. In experiments in the main paper, the RBF bandwidth $\sigma$ was set to $\sigma = 1$. We vary $\sigma$ by 2 and 3 times larger and smaller, leading to 4 new combinations: $0.33, 0.5, 2, 3$. There are two groups of results in Figure \ref{fig:ablationsigma}. The obvious outlier is $\sigma = 0.33$ which essentially could not learn -- a failure case. We postulate this is because the effective ``search" radius is too small and did not leverage (or condition on) neighbouring points to make an estimate. The other group, the rest of the $\sigma$'s, showed invariance to the choice of $\sigma$ as they are statistically identical in performance. This is an advantage as this simplifies the hyperparameter tuning process as long as $\sigma$ is not too small.

\subsection{Hardware Usage for Experiments}

\alg, NeuralUCB and NeuralTS use GPU-acceleration since they have many parameters that need to be optimized. LinUCB, LinTS and SquareCB use CPU only. The GPU used for all experiments in this paper was NVIDIA T4V2 with 16 GB of VRAM. 12 CPU cores with 48 GB of RAM were used for all experiments.

\subsection{MIND Experiment Details}
\label{apdx:minddetails}

Prior to the experiments, the titles of all news articles are converted to BERT embeddings. The specific pretrained model for BERT from Huggingface \citep{wolf2019huggingface} is \verb+bert-base-uncased+. The resultant vectors that summarize the entire news article are of size 768 so we used PCA to downsample to 64 dimensions and saved the embedding vectors as a file.

The test dataset provided by MIND does not include labels \citep{wu2020mind} so we could not use that split. Instead, we combine the training and validation datasets that span 7 days. These datasets contain information on article impressions and clicks, i.e. what the users see and which ones do they interact with. Since there are many of them, we decided to be selective and remove instances with nine or more articles shown and sample 20\% of the entire dataset. This limits the initial amount of information given to the models and forces the evaluation to be based on incremental learning and exploration strategies.

The first three days were selected to be used as training and validation datasets -- 80\% training and 20\% validation. This implies that reference datasets during the training of $\phi$ will be partitioned into $T=3$ partitions. The remaining four days will be used as a pool of test data. Given computational constraints, especially over 10 random seeds, we pick only 500 points to be tested. However, the chosen 500 points are selected in a way that takes into account the date. Specifically, every $j^\text{th}$ (fixed) entry is selected as test points such that the first and last points are close to the start of the fourth day and end of the seventh day respectively.

\begin{table}
	\caption{Augmenting and diminishing probability schedules by day}
	\label{tab:augprob}
	\centering
	\begin{tabular}{lc}
		\toprule
		Day     & Probability \\
		\midrule
		9 & 0.00 \\
		10 & 0.00 \\
		11 & 0.10 \\
		12 & 0.15 \\
		13 & 0.20 \\
		14 & 0.25 \\
		15 & 0.30 \\
		\bottomrule
	\end{tabular}
\end{table}

To demonstrate stronger concept drift so that its effect can be seen in fewer test samples, we gradually modify the click rate so that the sports category will have a higher click rate over time and all other categories will have a lower click rate in the same time span. An optimal agent should recognize this change and adapt to it.

We augment the click rates of the sports category and diminish the click rate of every other category. For augmenting, for any sports instance which is not a click, we sample a Bernoulli random variable (1 means click, 0 means no click) with probability $\Pr(X = 1) = p_i$ and assign that value to be click value. Similarly, for diminishing, for any non-sports instance which is a click, we sample a Bernoulli random variable with probability $\Pr(X = 0) = p_i$ and assign that value to be the click value. The probabilities by day are shown in Table \ref{tab:augprob}.

$\phi$ was trained with the loss function $\L(\hat\mu, r) = \L_\text{BCE}(\hat\mu, r) + 0.01\L_\text{ECE}(\hat\mu, r)$, where the ECE loss uses 5 bins. $\phi$ is initialized to be a multilayer perceptron with input dimension of 82 (18 news categories for context and 64 dimensional arm features), a single hidden layer of dimension 512 and an output dimension of 128. The bandwidth for the RBF kernel is chosen to be 0.6. $\phi$ is trained for 20 epochs with a batch size of 32. The learning rate is set to $10^{-3}$ (Adam optimizer with default configurations) with an exponential decay rate of 0.9. At each epoch, only 10\% of the training data is used for training, where 1\% of them are used as reference samples while the rest are used as query points. The $\phi$ chosen for testing is the one that attains the lowest validation loss.

For the Bayesian linear regression model, the $\alpha$ parameter, which controls the degree of exploration through the coefficient of the standard deviation of reward, is set to 1.96. $\lambda$, the parameter for numerical stability and regularization, and $\sigma$, the standard deviation of the residuals, are set to 1.

Two towers refer to the context encoder and arm encoder, which are both mappings to the same embedding space. The dot product between the context vector and arm vector represents the logits which are then passed to a softmax layer. The two towers implementation has two variants: small and large. For the small variant, the context encoder layers are $[18, 64, 32]$ and the arm encoder layers are $[64, 128, 32]$. For the large variant, the context encoder layers are $[18, 64, 64]$ and the arm encoder layers are $[64, 256, 64]$. The linear layers of both variants are interleaved with a ReLU activation layer. The small variant was trained for 5 epochs while the large variant was trained for 40 epochs. The learning rates are $8 \times 10^{-3}$ (Adam optimizer with default $\beta_1, \beta_2$) and the batch sizes are 32. For each variant, the model chosen for testing is the one that attains the lowest validation loss.

The modifications to the linear baseline algorithms are sliding window (SW) and discounting (D). Sliding window means that they are refitted with the most recent subset (50000 samples), while discounting uses a weighted linear regression \cite{russac2019weighted} with a $\gamma$ of 0.95. For the neural baselines, the online modification is changing the buffer into a sliding window (2000 samples). Note that the neural models are initially trained with the entire training size, as the sliding window buffer is only used during test time.

The Gaussian process, due to its cubic time complexity with respect to the number of samples, is restricted to a subsampling of 0.05. The forgetting element works by replacing the oldest sample with the newly seen sample, keeping the buffer the same size. The contextual restless bandit (CRB) requires heavy modification. Since the arms vary with almost no repeating arms, the state space is incredibly sparse and a transition dynamic that is unknown with minimal samples. Our problem does not have a complex budget constraint so the linear programming component is ignored.

\subsection{MIND Time Plot}
\label{apdx:mindtime}

\begin{figure}[H]
	\centering
	\includegraphics[width=0.5\textwidth]{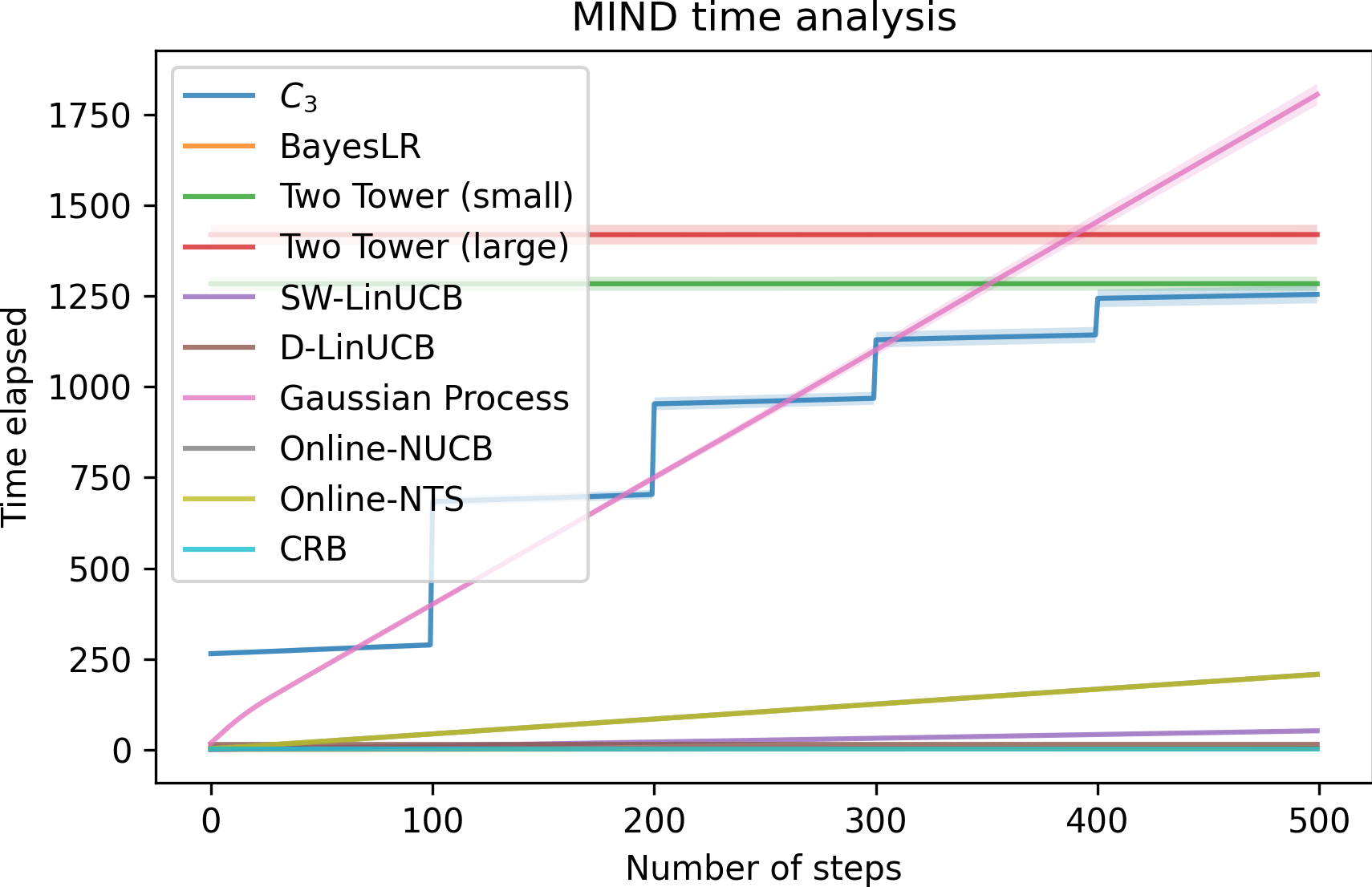}
	\caption{Time taken in seconds against timesteps for each algorithm.}
	\label{fig:MINDtimetmlr}
\end{figure}

Figure \ref{fig:MINDtimetmlr} shows a measure of time efficiency for each algorithm. The plot for some algorithms do not start at zero as they require some amount of warmup. The most prominent is the two tower approaches as they are relatively large models. The stepwise pattern in \alg is attributed to an unoptimized approach of dropping samples in its buffer. As mentioned in Section \ref{sec:newsrec}, 20\% of its samples are dropped every 100 steps. This implementation could be improved in future work.

\end{document}